\newtheorem{theorem}{Theorem}[section]
\newtheorem{lemma}[theorem]{Lemma}
\newtheorem{definition}[theorem]{Definition}
\newif\ifshowrevisions
  \newcommand{\rev}[1]{\textcolor{blue}{#1}}
  \newenvironment{revblock}{%
    \begingroup
    \color{blue}
    \captionsetup{font={color=blue}} %
  }{%
    \endgroup
  }
  \newcommand{\rev}[1]{#1}
  \newenvironment{revblock}{%
    \ignorespaces %
  }{%
    \ignorespacesafterend %
  }
\newcommand\blfootnote[1]{%
  \begingroup
  \renewcommand\thefootnote{}\footnote{#1}%
  \addtocounter{footnote}{-1}%
  \endgroup
}
\title{LLM DNA: Tracing Model Evolution via \\ Functional Representations}
\author{
Zhaomin Wu$^{1}$, Haodong Zhao$^{2}$, Ziyang Wang$^{1}$, Jizhou Guo$^{3}$, Qian Wang$^{1}$, Bingsheng He$^{1}$\\
$^{1}$Department of Computer Science, National University of Singapore \\
$^{2}$School of Computer Science, Shanghai Jiao Tong University \\
$^{3}$Zhiyuan College, Shanghai Jiao Tong University \\
\texttt{zhaomin@nus.edu.sg}, \texttt{zhaohaodong@sjtu.edu.cn}, \texttt{wangziyang@u.nus.edu}, \\ \texttt{sjtu18640985163@sjtu.edu.cn}, \texttt{e0493632@u.nus.edu}, \texttt{dcsheb@nus.edu.sg}
}
\def\nummodels{305}
\begin{document}

\doparttoc %
\faketableofcontents %

\maketitle

\begin{abstract}

The explosive growth of large language models (LLMs) has created a vast but opaque landscape: millions of models exist, yet their evolutionary relationships through fine-tuning, distillation, or adaptation are often undocumented or unclear, complicating LLM management. Existing methods are limited by task specificity, fixed model sets, or strict assumptions about tokenizers or architectures. Inspired by biological DNA, we address these limitations by mathematically defining \textit{LLM DNA} as a low-dimensional, bi-Lipschitz representation of functional behavior. We prove that LLM DNA satisfies \textit{inheritance} and \textit{genetic determinism} properties and establish the existence of DNA. Building on this theory, we derive a general, scalable, training-free pipeline for DNA extraction. In experiments across \nummodels{} LLMs, DNA aligns with prior studies on limited subsets and achieves superior or competitive performance on various tasks. Beyond these tasks, DNA comparisons uncover previously undocumented relationships among LLMs. We further construct the evolutionary tree of LLMs using phylogenetic algorithms, which align with shifts from encoder-decoder to decoder-only architectures, reflect temporal progression, and reveal distinct evolutionary speeds across LLM families.
\blfootnote{
\centering
  \faGlobe\ \href{https://dna.xtra.science/}{dna.xtra.science} \quad
  \faGithub\ \href{https://github.com/Xtra-Computing/LLM-DNA}{github.com/Xtra-Computing/LLM-DNA} \quad
  \faPython\ \href{https://pypi.org/project/llm-dna/}{pypi.org/project/llm-dna}
}

\end{abstract}

\section{Introduction}\label{sec:introduction}

The proliferation of large language models (LLMs) \citep{zhao2023survey} has produced a vast and complex landscape of models. Hugging Face~\citep{huggingface_website} alone hosts millions of models spanning diverse families, architectures, and tokenizers, with the number increasing at an accelerating pace~\citep{rahman2025hugginggraph}. Amid this rapid growth, tracing the evolutionary paths of LLMs through fine-tuning, distillation, or adaptation is critical across several areas: \textbf{safety auditing}---tracking how security risks such as backdoors are transferred between LLMs \citep{cheng2024transferring}; \textbf{model governance}---verifying that adaptations and fine-tuning follow license requirements \citep{duan2025position,lee2025quantification,xu2025copyright}; and \textbf{multi-agent systems}---guiding the structure design and planning \citep{han2024llm}. This motivates a central research question:\looseness-1
\vspace{-5pt}
\begin{center}
    \textit{Can we develop a principled framework to define and analyze the ``DNA'' of LLMs, enabling us to systematically characterize their functional similarities and evolutionary relationships?}
\end{center}
\vspace{-5pt}
Existing approaches to defining and extracting LLM ``DNA'' remain premature. Most prior work compresses LLMs into task-specific representations for model routing \citep{ong2025routellm} or ensemble learning \citep{huang2024ensemble}. These representations are trained for specific objectives and neither generalize across tasks nor capture overall model characteristics. \citet{zhuang2024embedllm} propose EmbedLLM, the first task-agnostic approach, which learns a compact representation for multiple downstream tasks. However, this representation is defined relative to a fixed set of LLMs---it changes when new models are added---and thus is not an intrinsic DNA of each model. \citet{nikolic2025model} and \citet{zhuindependence} further explore how to measure provenance or independence between two LLMs by calculating token similarity or parameter correlation, but these methods do not generalize to diverse LLMs with varying architectures and tokenizers. Collectively, these approaches do not furnish a formal and general definition of LLM DNA or a practical method for its extraction.

To address this gap, we introduce the concept of \textit{LLM DNA}: a compact, low-dimensional representation of a model's functional behavior. By analogy to biological DNA, it should satisfy two foundational properties: \textit{inheritance}---small perturbations to the model do not alter the DNA; and \textit{genetic determination}---models with similar DNA exhibit similar characteristics. If such a DNA is well defined and extractable, established phylogenetic tools can be leveraged to analyze LLM evolution.\looseness=-1

The challenges of defining and extracting of LLM DNA are two-fold. The theoretical challenge includes mathematically formalizing a DNA that satisfies both properties---inheritance and genetic determination---and proving the existence of DNA under such definition. The practical challenge is to design, given this definition, a general and scalable framework that efficiently extracts DNA across heterogeneous LLMs with distinct architectures, tokenizers, and prediction paradigms.

In this paper, we formally define LLM DNA as a low-dimensional vector that obtained via a bi-Lipschitz map from the LLM functional space. We prove that this definition naturally satisfies \textit{inheritance} and \textit{genetic determination} and, using the Johnson--Lindenstrauss (JL) lemma~\citep{johnson1984extensions}, formally establish the existence of such DNA. To practically extract LLM DNA, we derive random linear projection as an effective extraction method from these theoretical foundations and design \textit{RepTrace}---a general, scalable, training-free pipeline for DNA extraction. Finally, we build the evolutionary tree for \nummodels{} LLMs using RepTrace and report various new findings.\looseness=-1

\textbf{Contributions.} (1) We formally define LLM DNA and prove that it satisfies \textit{inheritance} and \textit{genetic determination} properties, proving the existence of LLM DNA. (2) Building on these results, we propose a general, scalable, training-free pipeline for extracting LLM DNA. (3) Experiments on \nummodels{} models show that LLM DNA outperforms existing baselines on specific tasks and aligns with prior findings. (4) Using LLM DNAs, we construct a phylogenetic tree that aligns with known architectural and temporal shifts and reveals undocumented relationships among LLMs.

\section{Related Work}

LLM DNA relates to prior work using terms such as \textit{representation} and \textit{fingerprint}. We group this literature into two views: (i) a \textit{macroscopic} view, which manages large groups of models by mapping each model to a \textit{representation} for model routing and ensemble learning; and (ii) a \textit{microscopic} view, which uses \textit{fingerprints} to assess whether two LLMs are related for provenance or identity.

\noindent \textbf{Representation of LLMs.} 
This direction manages large LLM collections by encoding each LLM into a representation. Most methods are task-specific: \citet{ong2025routellm,wang2024towards} learn representations for model routing, while \citet{huang2024ensemble,fang2024llm} encode models for ensemble learning. \rev{For instance, HybridLLM~\citep{ding2024hybrid}, trains a binary “easy-hard” classifier to route queries to a small or a larger model. RouteLLM~\citep{ong2025routellm} learns a supervised router that selects between stronger and weaker models based on query features. MetaLLM~\citep{nguyen2024metallm} formulates routing as a multi-armed bandit, adaptively trading off expected answer quality against invocation cost via online feedback. LLM-ensemble~\citep{fang2024llm} learns the weights for different LLMs to predict a specific attribute value. These task-specific approaches rely on supervised training for particular tasks, making their learned representations not generalizable across tasks.} \citet{zhuang2024embedllm} propose EmbedLLM, the first compact embedding supporting multiple tasks (e.g., model routing and benchmark prediction), but it is defined over a fixed model set; adding models requires retraining and can shift the learned embeddings. \rev{\citet{yax2024phylolm} propose PhyloLM, a state-of-the-art approach for constructing LLM phylogenies. It defines inter-model distances in a tokenizer-aware manner: for models with the same tokenizer, it uses Nei's genetic distance over output token distributions; for different tokenizers, it approximates distance using the first four characters of concatenated tokens. As a result, the cross-tokenizer metric is dominated by early-token behavior and may not reflect sentence-level distributions.} In contrast, our \textit{LLM DNA} is a stable and generalizable sentence-level intrinsic property extractable independently for each LLM.

\noindent \textbf{Fingerprint of LLMs.} 
This line of work evaluates dependence or similarity between pairs of LLMs by mapping each model to a representation, or \textit{fingerprint}. Unlike \textit{watermarks}, which actively insert a fingerprint during training~\citep{zhao2024nsmark,liusemantic,tang2025towards}, fingerprinting is post hoc and analyzes identifiable model properties without modifying training. Prior studies verify whether a remote LLM API serves the claimed model using strategic queries~\citep{pasquini2025llmmap} or specialized prompts~\citep{tsai2025rofl} with response classification; \citet{fu2025fdllm} further trains classifiers to capture stylistic signatures. Others go beyond text, leveraging parameters or token embeddings for provenance prediction~\citep{nikolic2025model,zhuindependence}. However, these pairwise methods are often structure-dependent and hard to scale to heterogeneous LLMs. LLM DNA, however, is a general representation capturing structural relationships across diverse models.

\label{sec:related-work}

\section{Definition and Existence of LLM DNA}\label{sec:dna-def-exist}

This section provides a formal definition of LLM DNA in Section~\ref{subsec:dna_definition}. \rev{Furthermore, Section~\ref{subsec:dna-exist} proves the existence of DNA under this definition. Additionally, Appendix~\ref{apdx:llm_property} elaborates that this definition satisfies two properties analogous to those of biological DNA.}

\subsection{Definition of LLM DNA}\label{subsec:dna_definition}

We formally define an LLM as a function from a discrete sequence of input texts to a vector of continuous logits (Definition~\ref{def:llm}). This definition applies to general LLMs with diverse architectures and adopts an end-to-end perspective. \rev{For the theoretical analysis, we assume the input length is finite, i.e., bounded by a constant $m$. Notably, “finiteness” does not imply smallness; the sets involved may still be arbitrarily large. Meanwhile, our practical stochastic-approximation pipeline in Section~\ref{sec:dna-find} does not depend on any particular value of $m$.}  Under this bound, the input space becomes finite, which directly yields Lemma~\ref{lem:is_hilbert}, a key step in establishing the existence of LLM DNA.\looseness=-1

\begin{definition}[Large Language Model]\label{def:llm}
Let $\Sigma$ be a finite character set. Let $\Sigma^*$ be the set of all sequences of characters (strings) from $\Sigma$. For a maximum sequence length $m \in \mathbb{Z}^+$, we define the space of admissible sequences as $\mathcal{S}_m := \{x \in \Sigma^* \mid |x| \le m\}$. Let $N = |\mathcal{S}_m|$. A \textbf{Large Language Model (LLM)} is a function $f: \mathcal{S}_m \to \mathcal{O}$ that maps an admissible input sequence to a vector of real-valued logits, with each logit corresponding to a possible output sequence in $\mathcal{S}_m$. We denote the output space of LLM as $\mathcal{O} := \mathbb{R}^N$. The set of all LLMs constitutes the LLM function space, $\mathcal{F}$.
\end{definition}

\begin{definition}[DNA of an LLM]\label{def:dna_embedding}
Let $(\mathcal{F}, d_{H})$ be the metric space of LLM functions. The \textbf{DNA of an LLM} $f \in \mathcal{F}$ is a low-dimensional vector, denoted $\tau_f \in \mathcal{D}$, where the DNA space $\mathcal{D} = \mathbb{R}^L$ is equipped with the standard Euclidean distance $d_{\tau}(\tau_1, \tau_2) = \|\tau_1 - \tau_2\|_2$. The mapping from $\mathcal{F}$ to $\mathcal{D}$ must satisfy a bi-Lipschitz condition; that is, there exist constants $c_2 > c_1 > 0$ such that for any two LLMs $f_1, f_2 \in \mathcal{F}$ with corresponding DNAs $\tau_{f_1}$ and $\tau_{f_2}$, it holds that:
\( c_1 \cdot d_H(f_1, f_2) \le d_{\tau}(\tau_{f_1}, \tau_{f_2}) \le c_2 \cdot d_H(f_1, f_2) \)
\end{definition}

Definition~\ref{def:dna_embedding} guarantees that functionally similar LLMs have proximate DNAs, and vice versa. \rev{These lower and upper bounds in the bi-Lipschitz condition naturally guarantee two key properties of LLM DNA: (1) \textbf{Inheritance}—a lineage operation on an LLM, such as fine-tuning, produces descendants with similar DNAs; and (2) \textbf{Genetic determinism}—LLMs with similar DNAs exhibit similar functional behaviors. Both properties are formally stated and proved in Appendix~\ref{apdx:llm_property}.}

\subsection{Existence of LLM DNA}\label{subsec:dna-exist}

This section provides a formal proof for the existence of the LLM DNA. The core idea is that an LLM's function can be represented as a vector in a high-dimensional Hilbert space (Lemma~\ref{lem:is_hilbert}). This allows us to apply the Johnson-Lindenstrauss (JL) Lemma~\citep{johnson1984extensions}, a powerful result in dimensionality reduction, to guarantee the existence of a low-dimensional embedding that preserves the geometry of the LLM functional space $\mathcal{F}$.

\begin{restatable}[Existence of LLM DNA]{theorem}{thmdnaexistence}\label{thm:dna_existence}
For any finite set of $K$ LLMs $\mathcal{F}_K=\{f_1, \dots, f_K\} \subset \mathcal{F}$, a DNA representation (Definition~\ref{def:dna_embedding}) with DNA space $\mathcal{D} = \mathbb{R}^L$ exists, satisfying for all $f_i, f_j\in\mathcal{F}_K$,
\( c_1\cdot d_H(f_i, f_j) \le d_{\tau}(\tau_{f_i}, \tau_{f_j}) \le c_2 \cdot d_H(f_i, f_j), \)
where the target dimension $L$ is
{
\( L = O\left(\left[(c_2 + c_1)/(c_2 - c_1)\right]^2 \log K\right). \)
}
\end{restatable}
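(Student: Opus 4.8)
The plan is to reduce the statement directly to the Johnson--Lindenstrauss lemma by exploiting Lemma~\ref{lem:is_hilbert}, which tells us that $(\mathcal{F}, d_H)$ is a Hilbert space. First I would observe that the finite set $\mathcal{F}_K = \{f_1, \dots, f_K\}$ lives inside this Hilbert space, and that each $f_i$ can be identified with a concrete vector: stacking the blocks $\sqrt{\mu(x_1)}\,f_i(x_1), \dots, \sqrt{\mu(x_N)}\,f_i(x_N)$ gives a point $v_i \in \mathbb{R}^{N^2}$ whose Euclidean norm reproduces $d_H$, i.e.\ $\|v_i - v_j\|_2 = d_H(f_i, f_j)$. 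So the problem becomes: embed $K$ points of a Euclidean space into $\mathbb{R}^L$ with controlled bi-Lipschitz distortion.

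Next I would invoke the JL lemma in its distortion form: for any $\varepsilon \in (0,1)$ and any $K$ points in a Euclidean space, there is a linear map $\Phi$ into $\mathbb{R}^L$ with $L = O(\varepsilon^{-2}\log K)$ such that $(1-\varepsilon)\|v_i - v_j\|_2 \le \|\Phi v_i - \Phi v_j\|_2 \le (1+\varepsilon)\|v_i - v_j\|_2$ for all pairs. Setting $\tau_{f_i} := \Phi v_i$ defines the candidate DNA map. To match the prescribed constants $c_2 > c_1 > 0$, I would rescale $\Phi$ by the factor $\tfrac{c_1 + c_2}{2}$ and choose $\varepsilon = \tfrac{c_2 - c_1}{c_2 + c_1}$, so that $(1-\varepsilon)\cdot\tfrac{c_1+c_2}{2} = c_1$ and $(1+\varepsilon)\cdot\tfrac{c_1+c_2}{2} = c_2$; this is the routine algebra that yields exactly the two-sided bound $c_1 d_H(f_i,f_j) \le d_\tau(\tau_{f_i},\tau_{f_j}) \le c_2 d_H(f_i,f_j)$. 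Substituting this $\varepsilon$ into $L = O(\varepsilon^{-2}\log K)$ gives $L = O\!\left(\left(\tfrac{c_2+c_1}{c_2-c_1}\right)^2 \log K\right)$, as claimed. Finally I would note that the rescaled $\Phi$, composed with the coordinatization $f \mapsto v$, is a bi-Lipschitz map in the sense of Definition~\ref{def:dna_embedding}, so the constructed $\tau$ is a valid DNA representation.

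The one genuine subtlety---and the step I would be most careful about---is the logical scope of the bi-Lipschitz condition. Definition~\ref{def:dna_embedding} phrases the bi-Lipschitz requirement as a statement quantified over all $f_1, f_2 \in \mathcal{F}$, whereas JL only delivers the inequality on the chosen finite set $\mathcal{F}_K$. I would resolve this by reading the existence claim as: for each finite $\mathcal{F}_K$ there is a map achieving the bound \emph{on that set} (the theorem statement itself restricts to ``for all $f_i, f_j \in \mathcal{F}_K$''), which is exactly what JL provides; one may then either take $\mathcal{F}_K$ to be the collection of models under study, or remark that any linear $\Phi$ extends to all of $\mathcal{F}$ though with distortion guarantees only on $\mathcal{F}_K$. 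A secondary point worth stating explicitly is that $d_H$ is only a genuine metric (not merely a pseudmetric) when distinct LLMs are not $\mu$-almost-everywhere equal; if $\mu$ has full support this is immediate, and in any case collapsing $\mu$-equivalent functions does not affect the argument. Everything else is a direct application of JL plus the identification of $\mathcal{F}$ with a Euclidean space from Lemma~\ref{lem:is_hilbert}.
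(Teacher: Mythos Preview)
Your proposal is correct and follows essentially the same route as the paper: invoke Lemma~\ref{lem:is_hilbert} to place $\mathcal{F}_K$ inside a Hilbert space, apply the JL lemma with $\varepsilon=\tfrac{c_2-c_1}{c_2+c_1}$, and rescale by $\tfrac{c_1+c_2}{2}$ to hit the exact constants. Your explicit coordinatization via the $\sqrt{\mu(x_i)}$-weighted blocks and your remarks on the finite-set scope and the pseudometric issue go slightly beyond what the paper writes out, but the core argument is identical.
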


The proof of Theorem~\ref{thm:dna_existence} appears in Appendix~\ref{apdx:proof_dna_existence}. The required DNA dimension $L$ trades off with the bi-Lipschitz constants $c_1,c_2$, whose tightness---captured by $\epsilon = (c_2 - c_1)/(c_2 + c_1)$---governs embedding distortion. High fidelity (small distortion) requires $c_1 \approx c_2$, yielding very small $\epsilon$ and, by $L = O(\epsilon^{-2}\log K)$, a large $L$. Conversely, tolerating more distortion (a larger gap between $c_1$ and $c_2$) increases $\epsilon$ and permits a more compact, lower-dimensional DNA representation.\looseness=-1

Theorem~\ref{thm:dna_existence} not only guarantees the existence of an LLM DNA but also yields a practical construction. The proof leverages the constructive nature of the JL lemma: the required linear map $E$ need not be specially engineered; a scaled \textbf{random linear projection} satisfies the desired bi-Lipschitz property with arbitrarily high probability. This insight is formalized in the following corollary.

\begin{restatable}[Construct LLM DNA via Random Projection]{corollary}{cordnaconstruct}\label{cor:dna_construction}
For any finite set of $K$ LLMs $\mathcal{F}_K=\{f_1, \dots, f_K\} \subset \mathcal{F}$, the LLM DNA in Theorem~\ref{thm:dna_existence} can be obtained from a random linear projection $E:\mathcal{F}\to\mathbb{R}^L$ with probability at least $1-2/K^2$.
\end{restatable}

Beyond the probability bound established in Corollary~\ref{cor:dna_construction}, \citet{larsen2014johnson} proved that random linear projection is the \textbf{optimal} linear dimensionality reduction method. Leveraging both its optimality and computational efficiency, we adopt random Gaussian projection for extracting LLM DNA, with practical details introduced in Section~\ref{sec:dna-find}.

\section{Extract LLM DNA}\label{sec:dna-find}

Extracting LLM DNA in practice entails two challenges: (i) instantiating the output space $\mathcal{O}$ to reflect \textbf{semantic content}, since surface-form string comparisons are brittle; and (ii) evaluating functional distance over the full input space $\mathcal{S}_m$, which is \textbf{computationally prohibitive}. We propose an extraction pipeline, \textit{RepTrace}, which addresses these via a semantic response representation (Section~\ref{subsec:semantic_meaning}) and an expectation-based functional distance that admits efficient approximation (Section~\ref{subsec:llm_func_dist}). The full RepTrace pipeline is provided in Section~\ref{subsec:dna_extract} and illustrated in Figure~\ref{fig:dna_extract}.

\begin{figure}[th]
    \centering
    \includegraphics[width=0.8\linewidth]{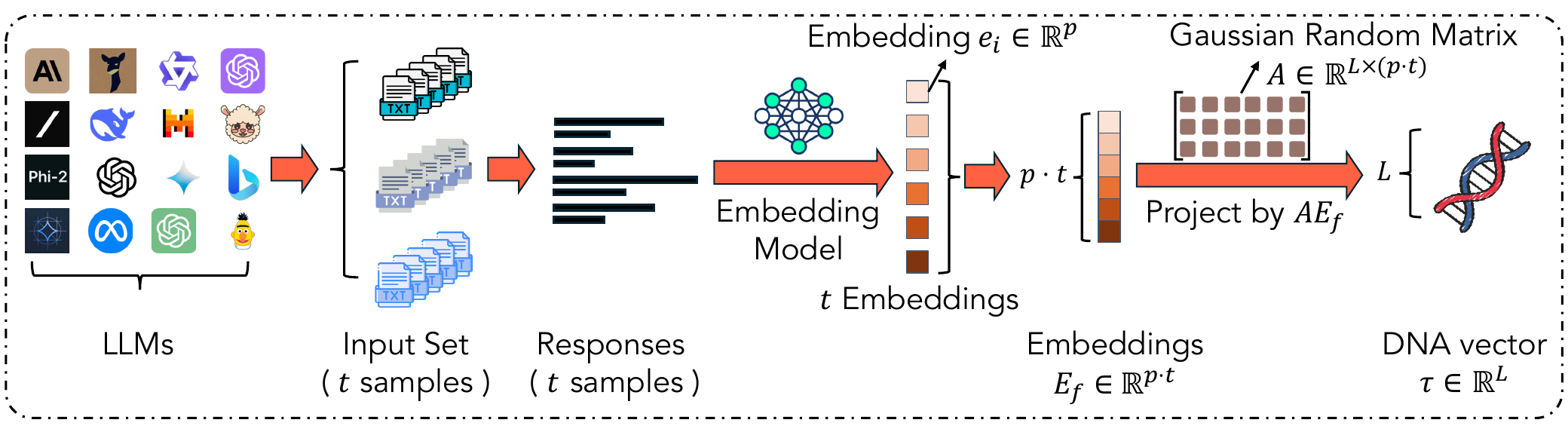}
    \caption{Visualization of RepTrace: LLM DNA extraction workflow}
    \label{fig:dna_extract}
\end{figure}

\subsection{Construct Semantic-Aware Representation}\label{subsec:semantic_meaning}

This subsection details how to build a semantic-aware vector representation $\phi(f(x_i))$ for each LLM response $x_i$ such that $\|\phi(f_1(x_i))-\phi(f_2(x_i))\|_2$ effectively approximates $\| f_1(x_j) - f_2(x_j) \|_2$. Purely string-based comparisons ignore semantic information: for example, the words ``holiday'' and ``vacation'' are semantically similar yet differ completely as character sequences. To convert textual outputs into vectors that capture semantic meaning, we use a sentence-embedding model to map each response to a fixed-size embedding. 

This design has three benefits: (1) it is LLM-agnostic for text-generation models, requiring no access to internal architectures or tokenizers as in many embedding/model-based methods~\citep{yax2024phylolm}; (2) it applies to closed-source LLMs because API-only models can still be queried for DNA computation; and (3) it accommodates new LLMs without recomputing existing DNAs, overcoming the limitation of EmbedLLM~\citep{zhuang2024embedllm}.

\subsection{Approximate LLM Functional Distance}\label{subsec:llm_func_dist}

This subsection details our practical approach to measuring the distance between LLMs. Though direct computing functional distance is theoretically sound, its computation requires evaluating functions over the entire, combinatorially large input space $\mathcal{S}_m$, which is infeasible. To bridge theory and practice, we introduce a tractable alternative: a \textit{Stochastic Functional Distance}. This new metric is defined not over the full input space, but as a statistical expectation over a random sample of inputs, reflecting the practical scenario of evaluating models on a finite set of representative prompts.

\begin{definition}[Stochastic Functional Distance]\label{def:stochastic_functional_distance}
Let $f_1, f_2 \in \mathcal{F}$ be two LLM functions. Let $\mu$ be a probability measure on the input space $\mathcal{S}_m$, and let $t \in \mathbb{Z}^+$ be a fixed sample size. Let $\mathcal{S}_t = \{x_1, \dots, x_t\}$ denote a set of $t$ independent and identically distributed (i.i.d.) random variables representing inputs drawn from the distribution $\mu$. The \textbf{Stochastic Functional Distance} $d_f(f_1, f_2)$ is defined as the expected Euclidean distance between the concatenated semantic representations of the LLMs' outputs over a random sample $\mathcal{S}_t$:
{
$d_f(f_1, f_2) := \mathbb{E}_{\mathcal{S}_t \sim \mu} \left[ \sqrt{\sum_{x_j\in S_t} \| f_1(x_j) - f_2(x_j) \|_2^2} \right].$}
\end{definition}

A concentration bound provides a formal reliability guarantee for our empirical distance. The theorem below shows that the estimate concentrates sharply around the true Hilbert distance $d_H$: for any deviation level, the tail probability decays exponentially in the sample size $t$. Thus, the estimate offers a high-confidence measure of the underlying functional distance.

\begin{restatable}[Concentration of Empirical Functional Distance]{lemma}{lemconcentrationbound}\label{lem:concentration_bound}
Let $f_1, f_2 \in \mathcal{F}$ be two LLM functions. Let $\hat{d}_f^2 = \sum_{j=1}^{t} \| f_1(x_j) - f_2(x_j) \|_2^2$ be the squared empirical functional distance calculated from a sample of $t$ i.i.d. inputs $\{x_1, \dots, x_t\}$. Let $d_H^2$ be the true squared Hilbert distance. Assume the squared Euclidean distance between any two output vectors is bounded such that $\| f_1(x) - f_2(x) \|_2^2 \le C_{\max}$ for any input $x$ and for all functions of interest. Then, for any $\epsilon > 0$, the following bound holds for the sample average:
{
\( P\left( \left| \frac{1}{t}\hat{d}_f^2 - d_H^2 \right| \ge \epsilon \right) \le 2 \exp\left(-\frac{2 t \epsilon^2}{C_{\max}^2}\right). \)}
\end{restatable}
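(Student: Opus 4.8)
The plan is to recognize the empirical quantity $\tfrac{1}{t}\hat d_f^2 = \tfrac{1}{t}\sum_{j=1}^t \|f_1(x_j) - f_2(x_j)\|_2^2$ as a sample mean of $t$ i.i.d.\ random variables $Z_j := \|f_1(x_j) - f_2(x_j)\|_2^2$, where each $x_j \sim \mu$. First I would verify that the population mean of each $Z_j$ is exactly the true squared Hilbert distance: by the definition of $d_H$ in Lemma~\ref{lem:is_hilbert}, $\E_{x\sim\mu}[Z_j] = \sum_{i=1}^N \mu(x_i)\|f_1(x_i) - f_2(x_i)\|_2^2 = d_H^2$. So $\tfrac{1}{t}\hat d_f^2$ is an unbiased estimator of $d_H^2$, and the left-hand side of the claimed inequality is precisely the probability that this sample mean deviates from its expectation by at least $\epsilon$.

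Next I would invoke Hoeffding's inequality. The hypothesis $\|f_1(x) - f_2(x)\|_2^2 \le C_{\max}$ for all inputs $x$ gives $Z_j \in [0, C_{\max}]$ almost surely, so each $Z_j$ is bounded in an interval of length $C_{\max}$. Hoeffding's two-sided bound for the average of $t$ independent variables each supported on an interval of length $b_j = C_{\max}$ states
\[
P\!\left(\left|\frac{1}{t}\sum_{j=1}^t Z_j - \E[Z_1]\right| \ge \epsilon\right) \le 2\exp\!\left(-\frac{2t^2\epsilon^2}{\sum_{j=1}^t b_j^2}\right) = 2\exp\!\left(-\frac{2t^2\epsilon^2}{t\,C_{\max}^2}\right) = 2\exp\!\left(-\frac{2t\epsilon^2}{C_{\max}^2}\right),
\]
which is exactly the stated bound once we substitute $\E[Z_1] = d_H^2$ and $\tfrac{1}{t}\sum_j Z_j = \tfrac{1}{t}\hat d_f^2$.

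There is no real obstacle here; the only things to be careful about are (i) confirming that the $Z_j$ are genuinely i.i.d.\ — which follows because the $x_j$ are i.i.d.\ draws from $\mu$ and $Z_j$ is a fixed measurable function of $x_j$ alone — and (ii) matching the normalization in Hoeffding's inequality correctly, since the version for sums versus averages differs by factors of $t$ in the exponent. A secondary subtlety worth a sentence is that the boundedness constant $C_{\max}$ must hold uniformly over the pair $(f_1, f_2)$ of interest and over all inputs; this is exactly the stated assumption, and in the semantic-embedding instantiation of Section~\ref{subsec:semantic_meaning} it holds automatically whenever the sentence-embedding map has bounded norm. I would close by noting that the same argument applied to $\hat d_f$ itself (rather than its square), together with Jensen's inequality relating $d_f$ to $d_H$, explains why the stochastic functional distance of Definition~\ref{def:stochastic_functional_distance} is a faithful tractable surrogate for $d_H$.
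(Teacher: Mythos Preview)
Your proposal is correct and mirrors the paper's own proof: both define $Z_j = \|f_1(x_j)-f_2(x_j)\|_2^2$, identify $\E[Z_j]=d_H^2$, note $Z_j\in[0,C_{\max}]$, and apply the two-sided Hoeffding inequality to the sample average. The additional remarks you give about i.i.d.\ verification, normalization, and the surrogate relationship to $d_f$ are sound commentary but not part of the paper's argument.
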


The proof of Lemma~\ref{lem:concentration_bound} is included in Appendix~\ref{apdx:proof_concentration_bound}. Based on the theoretical bound, we employ the empirical estimate in Definition~\ref{def:stochastic_functional_distance} to compute the functional distance in practice. The process begins by sampling a set of $t$ representative prompts, $\mathcal{S}_t = \{x_1, \dots, x_t\}$, from a distribution $\mu$ that reflects real-world inputs. For a given LLM $f$, we collect the textual response for each prompt $x_j \in \mathcal{S}_t$ and encode it into a semantic vector as described in Section~\ref{subsec:semantic_meaning}. These $t$ response vectors are then concatenated end-to-end. This procedure yields a single, high-dimensional vector that serves as a functional representation for the LLM over the sampled inputs. The squared Euclidean distance between two such representations approximates the squared empirical functional distance $\hat{d}_f^2$, which is further a robust and computable estimation of the true squared Hilbert distance $d_H^2$.

\subsection{Overall Pipeline}\label{subsec:dna_extract}

This section presents RepTrace algorithm, a training-free end-to-end DNA extraction pipeline. The procedure is detailed in Algorithm~\ref{alg:dna_pipeline} and visualized in Figure~\ref{fig:dna_extract}. We begin by selecting a sample input set $\mathcal{S}_t$ of $t$ prompts drawn from real-world datasets. Each input $x_i$ is fed to an LLM $f$ to generate a textual response $y_i$ (line~4), which is then encoded into a fixed-size semantic vector $e_i$ using a sentence-embedding model (line~5). These $t$ embedding vectors are then concatenated to form a single, high-dimensional functional representation $E_f$ for the LLM (line~6). Finally, a pre-computed random Gaussian projection $A$ (line~1) linearly projects this high-dimensional representation $E_f$ into a low-dimensional vector $\tau_f \in \mathbb{R}^L$ (line~7), referred to as the DNA of the LLM.

\begin{algorithm}[htpb]
\small
\DontPrintSemicolon
\LinesNumbered
\caption{RepTrace: LLM DNA Extraction Pipeline}\label{alg:dna_pipeline}
\SetKwInOut{Input}{Input}\SetKwInOut{Output}{Output}
\Input{%
 Max length $m$;\ sample size $t$;\ DNA dimension $L$;\ sentence-embedding model $\phi:\Sigma^* \to \mathbb{R}^p$;\\
 A sample input set $\mathcal{S}_t=\{x_1,\dots,x_t\}$ drawn uniformly from real-world datasets;\\
 A collection of LLMs $\mathcal{F}_K=\{f_1,\dots,f_K\}\subset\mathcal{F}$.
}
\Output{DNA vectors $\{\tau_f \in \mathbb{R}^L : f \in \mathcal{F}_K\}$.}
\BlankLine
\textbf{Initialize random Gaussian projection:} draw $A \in \mathbb{R}^{L \times (p \cdot t)}$ with $A_{jk} \sim \mathcal{N}(0, 1/\sqrt{L})$.\\
\ForEach{$f \in \mathcal{F}_K$}{
 \For{$i \gets 1$ \KwTo $t$}{
  $y_i \gets f(x_i)$ \tcp*{Textual response of $f$ on input $x_i$}
  $e_i \gets \phi(y_i) \in \mathbb{R}^p$ \tcp*{Semantic embedding}
 }
 $E_f \gets [e_1, e_2, \dots, e_t] \in \mathbb{R}^{p \cdot t}$ \tcp*{Concatenate response embeddings}
 $\tau_f \gets A \, E_f \in \mathbb{R}^L$ \tcp*{Project into DNA space}
}
\Return $\{\tau_f : f \in \mathcal{F}_K\}$\;
\end{algorithm}

\section{Experiments}\label{sec:experiment}

This section provides a rigorous empirical validation of LLM DNA as a robust functional fingerprint. Following our experimental setup across \nummodels{} models (\S\ref{subsec:exp-setting}, \S\ref{apdx:exp-detail}), we demonstrate DNA's efficacy in relationship detection (\S\ref{subsec:exp-llm-relation}) and training-free model routing (\S\ref{subsec:exp-model-route}). We verify the representation's stability through stress tests involving data heterogeneity (\S\ref{subsec:exp-stability}), synthetic random inputs (\S\ref{subsec:exp-rand-input}), chat template variations (\S\ref{subsec:abl-chat-template}), and fine-tuning intensity (\S\ref{subsec:exp-finetune-dna}). Systematic ablations characterize the influence of bottleneck embedding models (\S\ref{subsec:abl-bottleneck}), pre-aggregation dimensions $p$ (\S\ref{subsec:abl-embed-out-dim}), and DNA length $L$ (\S\ref{subsec:abl-dna-dim}). To address technical edge cases, we analyze parameter-size bias (\S\ref{subsec:llm-size-effect-dna}), same-tokenizer performance (\S\ref{subsec:abl-tokenizer}), and relationship "false positives" (\S\ref{subsec:false-pred-relation}). We conclude by leveraging these functional distances to reconstruct a phylogenetic tree of LLMs (\S\ref{subsec:group-phylo-tree-llm}, \S\ref{subsec:phylo-tree-llm}), providing a data-driven map of the field's architectural evolution.

\subsection{Experimental Setting}\label{subsec:exp-setting}

\noindent \textbf{Datasets.}
We evaluate across a diverse set of datasets covering question answering, commonsense reasoning, and general natural language understanding. Specifically, the pipeline integrates six widely used benchmarks, including SQuAD~\citep{rajpurkar2016squad}, CommonsenseQA~\citep{talmor2018commonsenseqa}, HellaSwag~\citep{zellers2019hellaswag}, Winogrande~\citep{sakaguchi2020winogrande}, ARC-Challenge~\citep{clark2018think}, and MMLU~\citep{hendryckstest2021}. DNA extraction employs a mixture of all six datasets, with each dataset contributing 100 random samples. Besides, EmbedLLM~\citep{zhuang2024embedllm} dataset is also used to evaluate model routing, though not involved in DNA extraction. For each dataset, prompts are drawn from their designated text fields (e.g., \texttt{question}, \texttt{sentence}, or \texttt{ctx}). 

\noindent \textbf{LLMs.}
This study spans \nummodels{} LLMs released by 153 organizations (e.g. Qwen, Llama, etc.) on Hugging Face. The evaluation focuses on text-generative models, covering both decoder-only and encoder-decoder architectures, with parameter scales ranging from 100M to 70B. Both instruction-tuned or base models are included in the evaluation. Text embeddings are extracted with the state-of-the-art open-source encoder \texttt{Qwen/Qwen3-Embedding-8B} \rev{by default, with performance comparison to \texttt{sentence-transformers/all-mpnet-base-v2} and \texttt{BAAI/bge-large-en-v1.5} presented}. More model details are included in Appendix~\ref{apdx:exp-detail}.

\subsection{Detect LLM Relationships}\label{subsec:exp-llm-relation}

In this subsection, we validate DNA against \citet{zhuindependence}, scale to \nummodels{} LLMs to demonstrate accurate relationship detection, and visualize the distribution to uncover undocumented model pairs.

Figure~\ref{fig:dna-llama2-dist} shows the DNA distribution of the eight models (four correlated, four independent) in Table~1 of \citet{zhuindependence}. The decision boundary is computed by a support vector machine (SVM) with an RBF kernel, indicating that the DNAs of correlated and independent models are clearly separated. This result shows that the DNA structure is consistent with both the experiments in \citet{zhuindependence} and the corresponding Hugging Face documentation.

\begin{figure}[ht]
    \centering
    \begin{minipage}[t]{0.49\textwidth}
        \centering
        \includegraphics[width=0.95\linewidth]{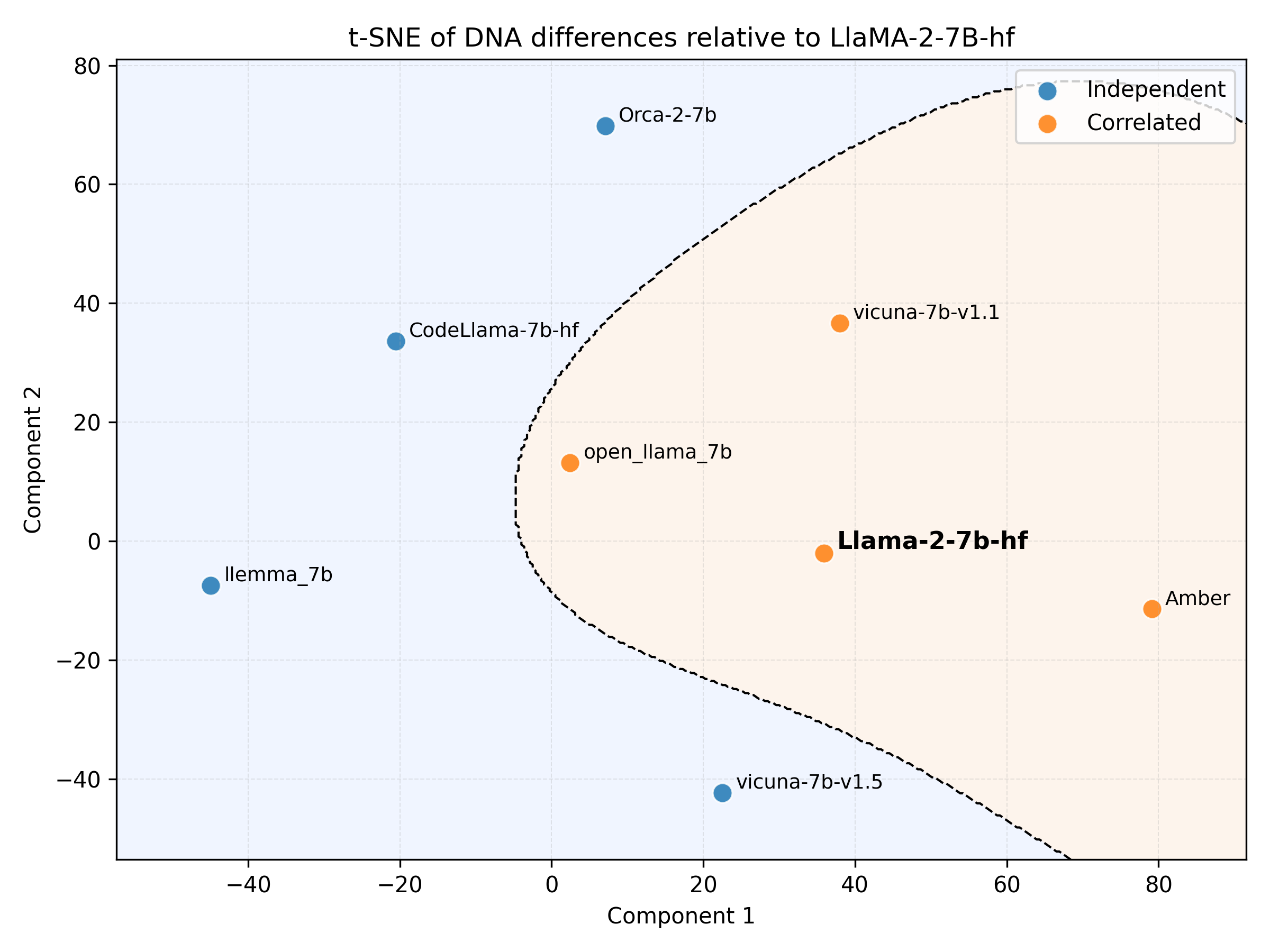}
        \captionof{figure}{DNA distribution of LLMs evaluated by \citet{zhuindependence}. ``Independent'' and ``Correlated'' relative to Llama-2-7B-hf are based on public documents. The boundary is computed by an SVM with RBF kernel, indicating that the DNAs of ``Independent'' and ``Correlated'' models are clearly separated.}
        \label{fig:dna-llama2-dist}
    \end{minipage}\hfill
    \begin{minipage}[t]{0.45\textwidth}
        \centering
        \includegraphics[width=\linewidth]{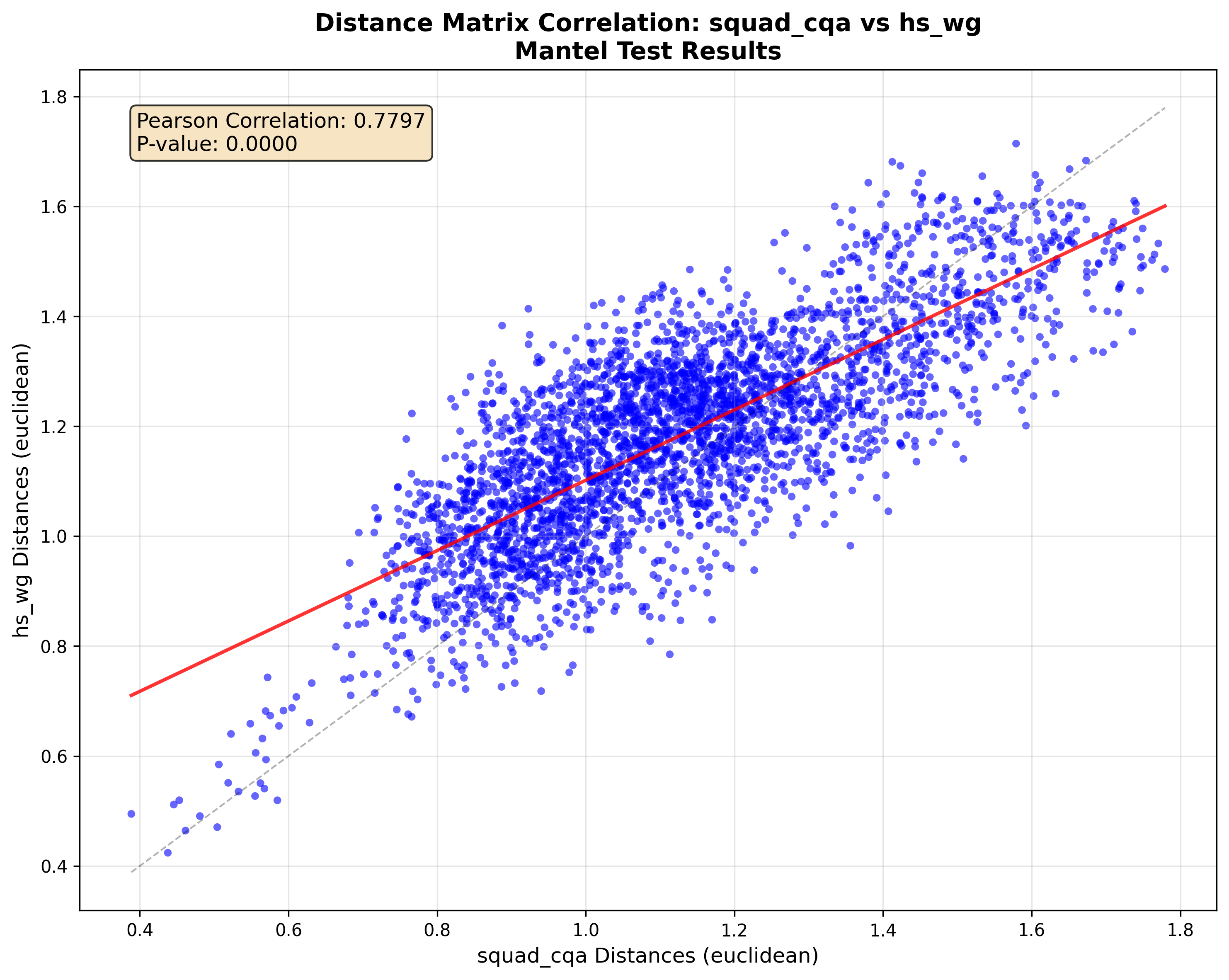}
        \captionof{figure}{Mantel test between DNA extracted from two disjoint datasets. Each point represents a single pair of models, plotted by their distance in the first dataset versus their distance in the second, showing a strong correlation (\(\text{Pearson-$r$}=0.7797\)) and high statistical significance (\(\text{$p$-value}<0.0001\)).}
        \label{fig:mantel-test}
    \end{minipage}
\end{figure}

Table~\ref{tab:dna-relation} extends this experiment to \nummodels{} models. Using the official Hugging Face relationship (the ``Model Tree'') as ground truth for correlated models, we obtain 83 correlated pairs. We additionally sample 83 random pairs and treat them as independent. The resulting dataset is split 8:2 into training and test sets. Three baselines are compared alongside our approach: \textit{Random}---uniformly guessing independent or correlated; \textit{Greedy}---predicting that all within-organization pairs are correlated and cross-organization pairs are independent; \rev{and PhyloLM~\citep{yax2024phylolm}---genetic (Nei) distances to other LLMs in the training set are regarded as a signature vector. For PhyloLM and DNA, an SVM with RBF kernel is used for prediction}. From Table~\ref{tab:dna-relation}, \rev{three} observations emerge. First, {DNA significantly outperforms the baselines and achieves a high AUC} of \(0.992\), indicating accurate detection of relationships among heterogeneous LLMs. Second, DNA attains higher recall than precision; this does not necessarily indicate a flaw in DNA, but rather suggests {the presence of undocumented relationships in the official model tree}. A detailed case study of the misclassified pairs appears in Appendix~\ref{subsec:false-pred-relation}. \rev{Third, {DNA quality is robust to the choice of sentence-embedding model}. Using BGE (0.3B) and MPNet (0.1B), despite their much smaller parameter sizes, yields comparable—and in this relation prediction task, even better—performance.}

Figure~\ref{fig:dna-dist-all} presents the t-SNE visualization of all \nummodels{} models, resulting in three main observations. First, {models from the same organization and model family cluster tightly} (e.g., Qwen3, Llama), confirming that DNA can be used to detect relationships among LLMs. Second, {fine-tuned models are closely related to their base models}, confirming the inheritance property. For instance, DeepSeek-distilled Qwen models locate near the Qwen3 cluster (right, purple). Third, {some relationships that are not well documented appear closely in the visualization}. For example, both Microsoft \texttt{orca-2-13b} and LMSYS \texttt{vicuna-7b-v1.1}, both of which are not in the ``Model Tree'', state in their model cards that they are fine-tuned from Llama without specifying the version. In Figure~\ref{fig:dna-dist-all}, \texttt{vicuna-7b-v1.1} is located within the Llama-base cluster (left-middle, green), and \texttt{orca-2-13b} lies in the Llama-chat cluster (upper-middle, green), suggesting their likely base versions in fine-tuning. This demonstrates the capability of DNA to detect related LLMs.

\begin{revblock}
    
    \begin{table}[ht]
    \centering
    \small
    \caption{DNA LLM-relation detection test performance (mean \textpm standard deviation across five seeds)}\label{tab:dna-relation}
    \begin{tabular}{lccccc}
    \toprule
    \textbf{Method} & \textbf{Accuracy} & \textbf{Precision} & \textbf{Recall} & \textbf{F1} & \textbf{AUC} \\
    \midrule
    Random & 0.493 \textpm 0.036 & 0.515 \textpm 0.034 & 0.516 \textpm 0.042 & 0.516 \textpm 0.036 & 0.492 \textpm 0.036 \\
    Greedy & 0.593 \textpm 0.018 & 0.759 \textpm 0.081 & 0.329 \textpm 0.000 & 0.458 \textpm 0.014 & 0.606 \textpm 0.023 \\
    PhyloLM + SVM & 0.742 \textpm 0.058 & 0.741 \textpm 0.073 & 0.812 \textpm 0.123 & 0.766 \textpm 0.057 & 0.788 \textpm 0.060 \\
    \midrule
    DNA (Qwen3) + SVM & 0.919 \textpm 0.048 & 0.898 \textpm 0.049 & 0.957 \textpm 0.072 & 0.925 \textpm 0.047 & 0.979 \textpm 0.027 \\
    DNA (BGE) + SVM & \textbf{0.934 \textpm 0.053} & \underline{0.905 \textpm 0.068} & \textbf{0.982 \textpm 0.059} & \textbf{0.940 \textpm 0.050} & \textbf{0.992 \textpm 0.023} \\
    DNA (MPNet) + SVM & \underline{0.932 \textpm 0.048} & \textbf{0.914 \textpm 0.065} & \underline{0.966 \textpm 0.069} & \underline{0.937 \textpm 0.047} & \underline{0.990 \textpm 0.020} \\
    \bottomrule
    \end{tabular}
    \end{table}

\end{revblock}

\begin{figure}[ht]
    \centering
    \includegraphics[width=0.87\linewidth]{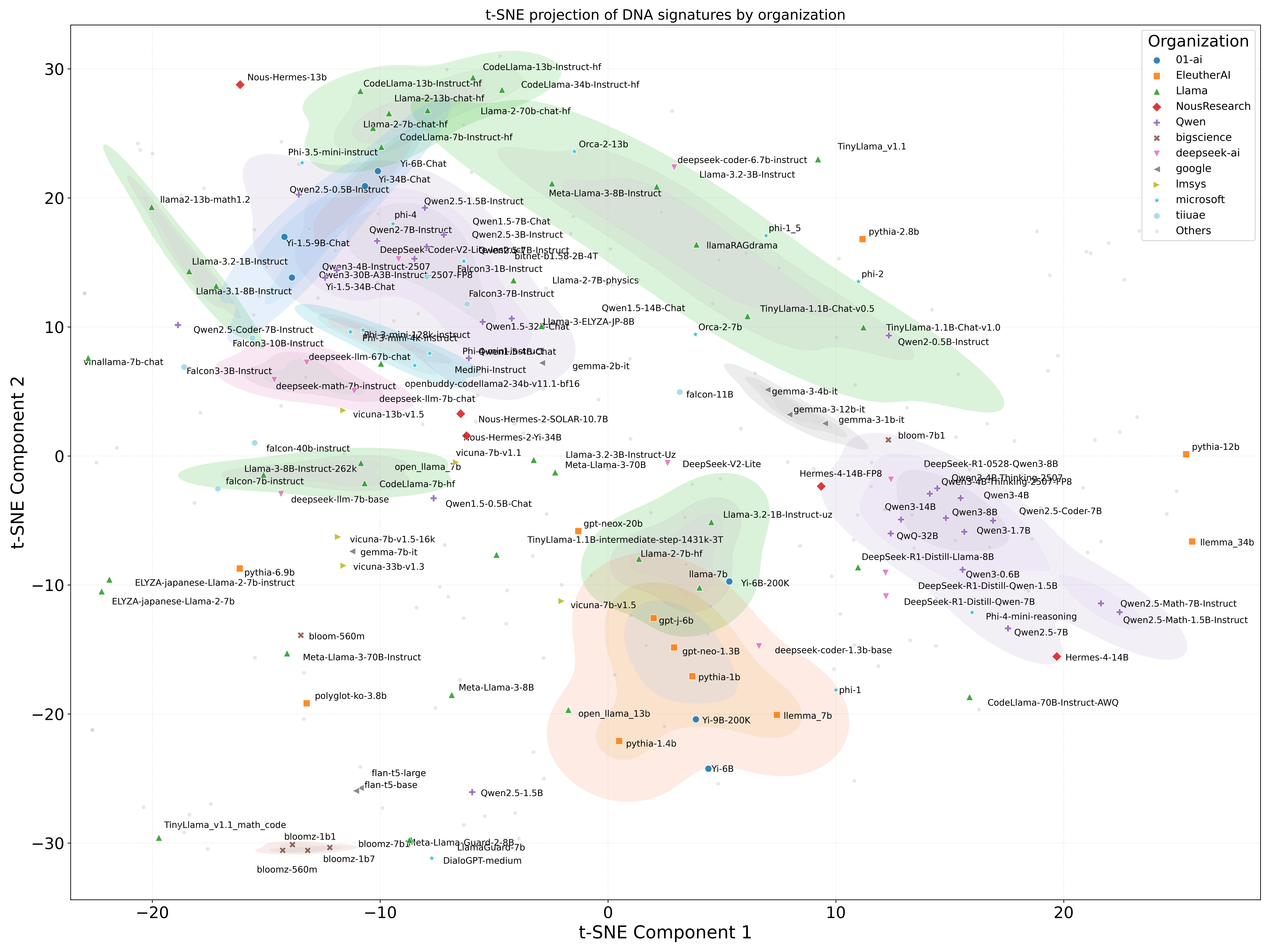}
    \caption{Visualization of DNAs by t-SNE. Colors denote organizations releasing LLMs. Organizations with fewer than five LLMs are collapsed into ``Others''. Background regions are obtained by localized DBSCAN started where each organization forms a group of more than three models.\looseness=-1}
    \label{fig:dna-dist-all}
\end{figure}

\begin{table}[ht]
\centering
\small
\caption{DNA-based routing accuracy on test set}
\label{tab:routing-route_fixed_model}
\begin{tabular}{cccc}
\toprule
\textbf{Random} & \textbf{Single Best} & \textbf{EmbedLLM (Matrix Factorisation)} & \textbf{DNA} \\
\midrule
0.402 \textpm 0.016 & 0.607 \textpm 0.000 & 0.665 \textpm 0.003 & \textbf{0.672 \textpm 0.008} \\
\bottomrule
\end{tabular}
\end{table}

\subsection{Model Routing}\label{subsec:exp-model-route}

This section investigates how DNA can benefit model routing. We use the same dataset, data split, baselines, and metric as EmbedLLM~\citep{zhuang2024embedllm}. EmbedLLM learns a compact LLM representation during routing training. To compare the quality of DNA with EmbedLLM-learned embeddings, within the matrix factorization framework of EmbedLLM, we replace the LLM embedding layer and its subsequent linear layer with frozen DNAs. These DNAs are directly compared with the query embeddings to make routing decisions. Notably, 11 models reported by \citet{zhuang2024embedllm} either no longer exist or failed to run; their results are therefore omitted. Hence, our numbers do not perfectly align with those in \citet{zhuang2024embedllm}.

Table~\ref{tab:routing-route_fixed_model} reports \emph{routing accuracy}, defined by \citet{zhuang2024embedllm} as the fraction of queries routed to an LLM that answers correctly. The results show that {frozen DNA outperforms the embeddings learned by EmbedLLM}. \rev{This result is notable because EmbedLLM explicitly learns representations on this routing dataset, whereas LLM DNA is training-free and task-agnostic. Achieving parity without task-specific optimization validates the quality of DNA and highlights its superior scalability: unlike EmbedLLM, our model supports arbitrary new models without retraining.}

\subsection{Stability of DNA on Heterogeneous Datasets}\label{subsec:exp-stability}

This subsection examines how the choice of sampled distribution \(S_t\) influences the quality of the generated DNAs. We assess this effect with a Mantel test between DNAs extracted from two disjoint datasets. Specifically, we construct one dataset as the union of 100 samples from SQuAD and 100 samples from CommonsenseQA (\texttt{squad\_cqa}) and another as the union of 100 samples from HellaSwag and 100 samples from Winogrande (\texttt{hs\_wg}). We compute DNAs from the same set of models, then calculate the distances between randomly selected pairs of models, visualize them in Figure~\ref{fig:mantel-test}, and compute the Pearson correlation and its \(p\)-value. From Figure~\ref{fig:mantel-test}, we observe {a strong correlation (\(\text{Pearson } r > 0.75\)) between the two distance structures with very high statistical significance (\(p < 0.001\))}. These results indicate that the structure of DNAs does not depend on the sampled data from which it is extracted, indicating the stability of our DNA extraction approach.

\subsection{DNA Extracted from Random Data}\label{subsec:exp-rand-input}

This subsection shows that LLM DNA remains discriminative even under random inputs. We generate 600 synthetic 100-word prompts using \texttt{wonderwords.RandomWord} (6$\times$100, matching the main protocol), run each LLM to obtain outputs, embed them with \texttt{Qwen3-8B-Embedding} using the same DNA extraction pipeline, and evaluate DNAs via relation prediction. An example slice of the random inputs is: ``\texttt{disdain chapel intention gymnast activation ... codpiece hypothesis endothelium masonry}''.

The results are reported in Table~\ref{tab:dna-relation-ablation-rand}. Interestingly, {the random-data setting even slightly outperforms the standard benchmark}. This highlights that the DNA extraction method is largely insensitive to input-dataset bias. The key intuition is that DNA analyses (e.g., relationship prediction) target \emph{relative relationships} among LLMs rather than absolute accuracy. Different datasets act as different “views” of the same model set: a suboptimal view may obscure some relations, but it is unlikely to fabricate spurious ones—for instance, it will not make two genuinely similar models appear unrelated. This observation also aligns with the stability we see under random projection.

\begin{table}[ht]
\centering
\small
\caption{DNA (Qwen3) relation prediction performance (DNA extracted from random inputs)}
\label{tab:dna-relation-ablation-rand}
\begin{tabular}{lccccc}
\toprule
\textbf{Method} & \textbf{Accuracy} & \textbf{Precision} & \textbf{Recall} & \textbf{F1} & \textbf{AUC} \\
\midrule
PhyloLM + SVM & 0.742 \textpm 0.058 & 0.741 \textpm 0.073 & 0.812 \textpm 0.123 & 0.766 \textpm 0.057 & 0.788 \textpm 0.060 \\
DNA + SVM & \underline{0.919 \textpm 0.048} & \underline{0.898 \textpm 0.049} & \underline{0.957 \textpm 0.072} & \underline{0.925 \textpm 0.047} & \underline{0.979 \textpm 0.027} \\
DNA (Rand) + SVM & \textbf{0.949 \textpm 0.048} & \textbf{0.931 \textpm 0.053} & \textbf{0.977 \textpm 0.062} & \textbf{0.952 \textpm 0.048} & \textbf{0.987 \textpm 0.038} \\
\bottomrule
\end{tabular}
\end{table}

\subsection{Ablation Study for Chat Template}\label{subsec:abl-chat-template}

We study how chat templates influence DNA extraction for instruction-tuned models. Chat templates inject the special tokens and formatting used during instruction tuning to elicit conversational behavior. For an instruction-tuned model (e.g., \texttt{Llama3-Instruct}), we compare two protocols: (i) bypass the template and feed the raw prompt directly (completion-style), and (ii) apply the template and compute DNA using only the extracted assistant's response. For non-chat (base/completion) models, the extraction pipeline is unchanged. We report relationship prediction performance under the same setting as Table~\ref{tab:dna-template-comparison}. Removing the chat template consistently improves accuracy by 1--3\% across encoders, suggesting that for cross-family comparisons involving both chat-tuned and completion models, omitting templates yields a fairer and more direct functional comparison by evaluating chat-tuned models through the same completion-style interface as their base counterparts.

\begin{table}[ht]
\centering
\small
\caption{DNA relation prediction performance: with vs without chat templates}
\label{tab:dna-template-comparison}
\begin{tabular}{llccccc}
\toprule
\textbf{Encoder} & \textbf{Template} & \textbf{Accuracy} & \textbf{Precision} & \textbf{Recall} & \textbf{F1} & \textbf{AUC} \\
\midrule
\multirow{2}{*}{Qwen3} & w/ template & 0.919 \textpm 0.048 & 0.898 \textpm 0.049 & 0.957 \textpm 0.072 & 0.925 \textpm 0.047 & 0.979 \textpm 0.027 \\
 & w/o template & \textbf{0.930 \textpm 0.045} & \textbf{0.916 \textpm 0.067} & \textbf{0.961 \textpm 0.071} & \textbf{0.935 \textpm 0.044} & \textbf{0.994 \textpm 0.021} \\
\midrule
\multirow{2}{*}{MPNet} & w/ template & 0.932 \textpm 0.048 & 0.914 \textpm 0.065 & 0.966 \textpm 0.069 & 0.937 \textpm 0.047 & 0.990 \textpm 0.020 \\
 & w/o template & \textbf{0.958 \textpm 0.047} & \textbf{0.943 \textpm 0.060} & \textbf{0.982 \textpm 0.059} & \textbf{0.960 \textpm 0.047} & \textbf{0.994 \textpm 0.016} \\
\midrule
\multirow{2}{*}{BGE} & w/ template & 0.934 \textpm 0.053 & 0.905 \textpm 0.068 & \textbf{0.982 \textpm 0.059} & 0.940 \textpm 0.050 & \textbf{0.992 \textpm 0.023} \\
 & w/o template & \textbf{0.945 \textpm 0.054} & \textbf{0.929 \textpm 0.073} & 0.974 \textpm 0.066 & \textbf{0.948 \textpm 0.052} & \textbf{0.992 \textpm 0.023} \\
\bottomrule
\end{tabular}

\end{table}

\subsection{Effect of Fine-tuning on the DNA Values}\label{subsec:exp-finetune-dna}

This subsection quantifies how fine-tuning changes DNA values. Starting from \texttt{Llama3-8B-Instruct}, we fully fine-tune all parameters on subsets of the NVIDIA \texttt{OpenMathInstruct-2} dataset~\citep{toshniwal2024openmath2} with sizes ${10, 100, 1000, 10000}$. Since \texttt{Llama3} was released before \texttt{OpenMathInstruct-2}, this dataset appears in neither its pretraining corpus nor our DNA extraction set. Figure~\ref{fig:abl-finetune-heatmap} compares the DNAs of the original and fine-tuned models and reports their $L_2$ distances to the original DNA.

\begin{figure}[ht]
\centering
\includegraphics[width=0.98\linewidth]{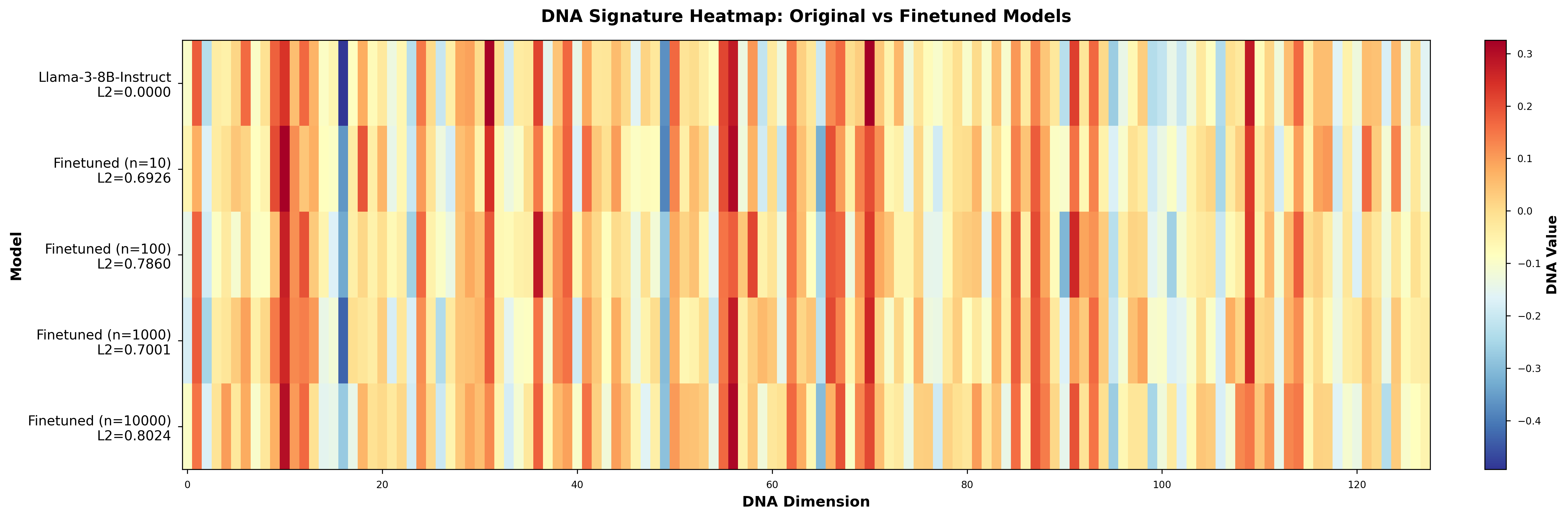}
\caption{Shift of DNA values when (full-parameter) fine-tuning \texttt{Llama3-8B-Instruct} with \texttt{OpenMathInstruct-2} subsets of size $n$}
\label{fig:abl-finetune-heatmap}

\end{figure}

We draw two conclusions. First, {the DNA distance increases with the amount of fine-tuned data, showing a monotonic but nonlinear trend}. Specifically, the distance rises sharply after fine-tuning on a tiny external subset (e.g., the first 10 samples causes $L_2=0.69$) and then grows more gradually (up to 0.80 with 10{,}000 samples) as additional samples are learned. Second, {the global DNA structure remains stable}: most subsequences preserve their dark/light patterns with only minor variation, suggesting that fine-tuning alters local traits while leaving inherited, intrinsic behaviors largely intact.\looseness-1

\subsection{Phylogenetic Tree of LLM}\label{subsec:group-phylo-tree-llm}

Analogous to phylogenetic analysis in biology, once LLM DNA is available, we can construct a phylogenetic tree to visualize model evolution. Specifically, we adopt the widely used \emph{Neighbor-Joining (NJ)} method \citep{saitou1987neighbor} based on DNA distances, followed by the default midpoint-rooting strategy. Figure~\ref{fig:phylo-tree-group} visualizes this tree, with branches ordered by their leaf counts. Notably, without access to release dates, LLM DNA distances generally recover the actual evolution path, including: (i) {architectural shifts} from encoder-decoder (e.g., Flan-T5) to widely used decoder-only models (e.g., Llama, Qwen); (ii) {temporal progression} from 2023 to 2025; and (iii) {lineage evolution within LLM families} (e.g., Llama~2 to Llama~3). Moreover, since longer branches represent a greater functional distance between successive model families, the tree suggests {distinct evolutionary speeds across model families}; for instance, Qwen and Gemma evolve faster than the Llama series. A more detailed phylogenetic tree with one model per leaf is provided in Appendix~\ref{subsec:phylo-tree-llm}.
\begin{figure}[ht]
    \centering
    \includegraphics[width=0.9 \linewidth]{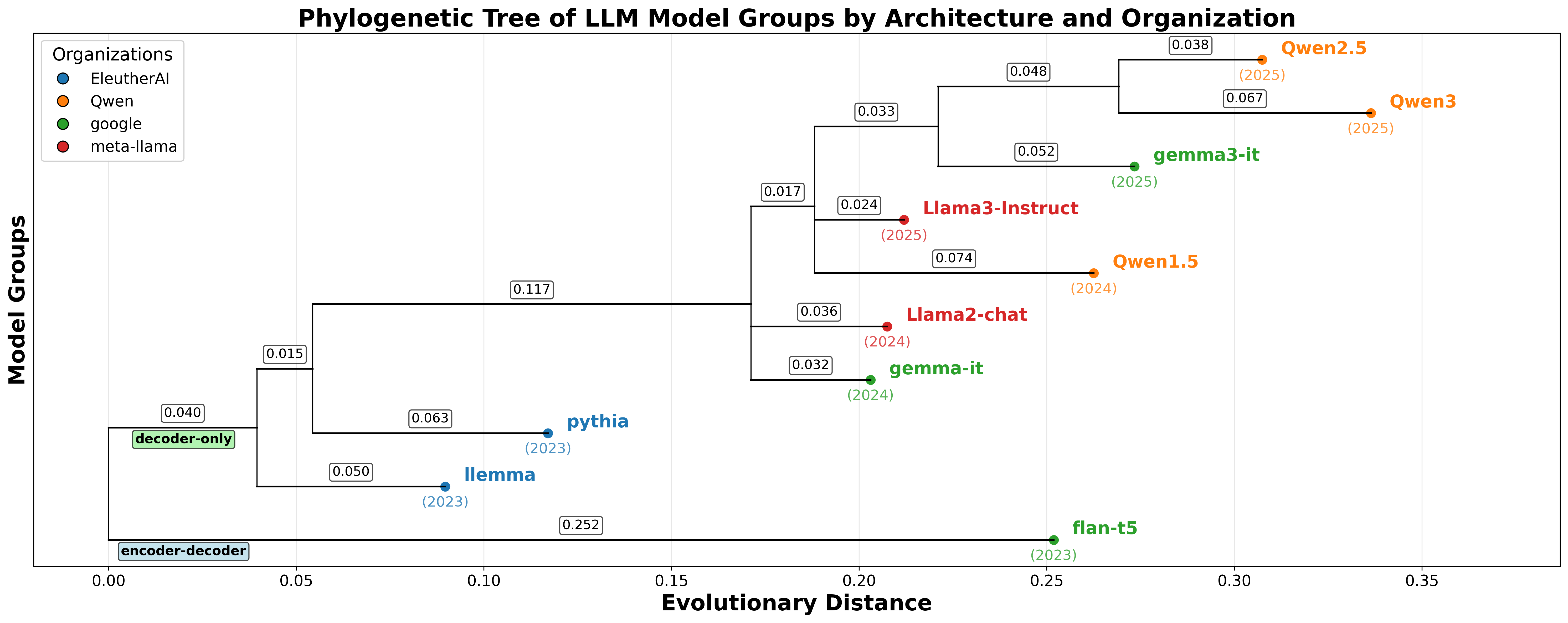}
    \caption{Phylogenetic Tree of LLM families built from DNA $\ell_2$ distances with NJ algorithm}
    \label{fig:phylo-tree-group}
\end{figure}

\section{Conclusion}\label{sec:conclusion}

In this paper, we introduced LLM DNA, a formal framework for representing the functional behavior of large language models. We mathematically defined LLM DNA as a low-dimensional vector, proved its existence, and established its key properties of inheritance and genetic determination. Based on this theory, we developed a practical, training-free pipeline to extract DNA from a wide range of models. Our experiments on 305 LLMs demonstrate that DNA can effectively detect both documented and undocumented evolutionary relationships, and constructing a phylogenetic tree from these DNAs provides a novel way to visualize the evolution of the LLM ecosystem.

\section*{Reproducibility Statement}
To facilitate reproducibility, we release our source code at \url{https://github.com/Xtra-Computing/LLM-DNA} and an interactive demo at \url{https://dna.xtra.science}. We also provide a PyPI package, available at \url{https://pypi.org/project/llm-dna}, which can be installed via \texttt{pip install llm-dna}. Regarding theoretical and experimental details, proofs appear in Appendix~\ref{apdx:proof}, data specifications in Section~\ref{subsec:exp-setting}, and hyperparameters, model details, and licenses in Appendix~\ref{apdx:exp-detail}.

\section*{Ethical Statement}
This study does not involve human subjects. All models used are open source, and we comply with their licenses; our use is solely for research purposes. The technique does not pose any significant privacy or security risk.

\section*{Acknowledgement}
This research/project is supported by the National Research Foundation, Singapore and Infocomm Media Development Authority under its Trust Tech Funding Initiative. Any opinions, findings and conclusions or recommendations expressed in this material are those of the author(s) and do not reflect the views of National Research Foundation, Singapore and Infocomm Media Development Authority.

Zhaomin Wu is also partially supported by a National Research Foundation (NRF) Postdoctoral Award.

\bibliography{references} %
\bibliographystyle{iclr2026_conference}

\newpage

\addcontentsline{toc}{section}{Appendix} %
\part{Appendix} %
\parttoc %

\appendix
\section{Proof}\label{apdx:proof}

\subsection{LLM Properties: Inheritance and Genetic Determinism}\label{apdx:llm_property}

\begin{revblock}

Before discussing the properties, we begin by defining the evolution of an LLM.

\begin{definition}[Evolution]\label{def:evolution}
Let $\mathcal{F}$ be the LLM function space in Definition~\ref{def:llm}, and let $d_H$ be the Hilbert distance on $\mathcal{F}$. 
For a threshold $\delta_H>0$, an \textbf{evolution} is a function $\mathcal{E}:\mathcal{F}\rightarrow \mathcal{F}$ such that for any $f\in\mathcal{F}$,
\[
d_H\big(f,\mathcal{E}(f)\big)<\delta_H.
\]
If $f_2=\mathcal{E}(f_1)$, we say $f_2$ is evolved from $f_1$ within $\delta_H$.
\end{definition}

This notion covers a wide range of operations that induce a bounded shift in an LLM’s functional behavior, including but not limited to fine-tuning, distillation, and reinforcement learning, as they modify the LLM within a bounded Hilbert-distance threshold.

\end{revblock}

The definition of LLM DNA gives rise to several properties analogous to biological DNA. The two most fundamental are \textbf{inheritance} and \textbf{genetic determinism} introduced at the start of this section. Drawing a parallel with biology, inheritance suggests that an evolutionary process acting on an LLM, such as fine-tuning, produces descendants with similar DNAs. Genetic determinism indicates LLMs with similar DNAs behave similarly. Both properties can be mathematically derived from the definition of DNA (Definition~\ref{def:dna_embedding}) in following forms.

\begin{theorem}[Inheritance]\label{thm:inheritance}
    Let an LLM $f_2 \in \mathcal{F}$ be derived from an LLM $f_1 \in \mathcal{F}$, with corresponding DNAs $\tau_{f_2}$ and $\tau_{f_1}$. For any desired DNA proximity $\epsilon_{\tau} > 0$, there exists a Hilbert distance threshold $\delta_H > 0$ such that if $d_H(f_1, f_2) < \delta_H$, then $d_{\tau}(\tau_{f_1}, \tau_{f_2}) < \epsilon_{\tau}$.
\end{theorem}

\begin{proof}
    This property is a direct consequence of the bi-Lipschitz condition in Definition~\ref{def:dna_embedding}. We must show that for any $\epsilon_{\tau} > 0$, we can find a $\delta_H > 0$ satisfying the implication:
    \begin{equation}\label{eq:dna_inheritance_goal}
        d_H(f_1, f_2) < \delta_H \implies d_{\tau}(\tau_{f_1}, \tau_{f_2}) < \epsilon_{\tau}
    \end{equation}
    The bi-Lipschitz condition provides the upper bound $d_{\tau}(\tau_{f_1}, \tau_{f_2}) \le c_2 \cdot d_H(f_1, f_2)$ for some constant $c_2 > 0$. Let us choose $\delta_H = \epsilon_{\tau} / c_2$. Since $\epsilon_{\tau} > 0$ and $c_2 > 0$, our threshold $\delta_H$ is also positive. Now, assume $d_H(f_1, f_2) < \delta_H$ for some pair of LLMs. It follows that:
    \begin{align*}
        d_{\tau}(\tau_{f_1}, \tau_{f_2}) &\le c_2 \cdot d_H(f_1, f_2) && \text{(by the Lipschitz condition)} \\
        &< c_2 \cdot \delta_H && \text{(by assumption)} \\
        &= c_2 \cdot \left(\frac{\epsilon_{\tau}}{c_2}\right) = \epsilon_{\tau}
    \end{align*}
    This confirms the implication in Equation~\ref{eq:dna_inheritance_goal}, and the theorem holds.
\end{proof}

\begin{theorem}[Genetic Determinism]\label{thm:determinism}
    Let $f_1, f_2 \in \mathcal{F}$ be two LLMs with corresponding DNAs $\tau_{f_1}, \tau_{f_2} \in \mathcal{D}$. For any desired functional proximity $\epsilon_H > 0$, there exists a DNA distance threshold $\delta_{\tau} > 0$ such that if $d_{\tau}(\tau_{f_1}, \tau_{f_2}) < \delta_{\tau}$, then $d_H(f_1, f_2) < \epsilon_H$.
\end{theorem}

\begin{proof}
    This property follows from the lower bound of the bi-Lipschitz condition. We need to show that for any given $\epsilon_H > 0$, there is a $\delta_{\tau} > 0$ such that:
    \begin{equation}\label{eq:dna_determinism_goal}
        d_{\tau}(\tau_{f_1}, \tau_{f_2}) < \delta_{\tau} \implies d_H(f_1, f_2) < \epsilon_H
    \end{equation}
    The bi-Lipschitz condition states that $c_1 \cdot d_H(f_1, f_2) \le d_{\tau}(\tau_{f_1}, \tau_{f_2})$ for some constant $c_1 > 0$. Rearranging this gives a bound on the Hilbert distance:
    \begin{equation}\label{eq:inverse_lipschitz}
        d_H(f_1, f_2) \le \frac{1}{c_1} d_{\tau}(\tau_{f_1}, \tau_{f_2})
    \end{equation}
    Let us choose $\delta_{\tau} = \epsilon_H \cdot c_1$. Since $\epsilon_H > 0$ and $c_1 > 0$, it follows that $\delta_{\tau} > 0$. Assuming $d_{\tau}(\tau_{f_1}, \tau_{f_2}) < \delta_{\tau}$, we have:
    \begin{align*}
        d_H(f_1, f_2) &\le \frac{1}{c_1} d_{\tau}(\tau_{f_1}, \tau_{f_2}) && \text{(from Equation~\ref{eq:inverse_lipschitz})} \\
        &< \frac{1}{c_1} \delta_{\tau} && \text{(by assumption)} \\
        &= \frac{1}{c_1} (\epsilon_H \cdot c_1) = \epsilon_H
    \end{align*}
    Thus, the implication holds, proving the theorem.
\end{proof}

\subsection{LLM Functional Space is a Hilbert Space}\label{apdx:proof_is_hilbert}

\begin{lemma}[LLM Functional Space is a Hilbert Space]\label{lem:is_hilbert}
    Let $\mathcal{F}$ be the space of LLM functions mapping from the finite input space $\mathcal{S}_m=\{x_1,\dots,x_N\}$ to the vector space of outputs $\mathcal{O} := \mathbb{R}^N$. Let $\mu: \mathcal{S}_m \to [0,1]$ be a probability distribution on the input space, assigning a probability $\mu(x_i)$ to each sequence such that $\sum_{i=1}^{N} \mu(x_i) = 1$. The pair $(\mathcal{F},d_H)$ forms a Hilbert space, where the distance metric $d_H$ is defined as:
    {
    \(
        d_H(f_1, f_2) = \sqrt{\sum_{i=1}^{N} \mu(x_i) \| f_1(x_i) - f_2(x_i) \|_2^2}
    \)
    }
\end{lemma}

\begin{proof}
We prove this by showing that $\mathcal{F}$ is a vector space equipped with a valid inner product.

\textbf{Vector Space Structure.}
Any function $f \in \mathcal{F}$ is uniquely determined by the tuple of its outputs. We can represent $f$ by its concatenated output vector $\mathbf{v}_f := (f(x_1), \dots, f(x_N)) \in \mathcal{O}^N$. This establishes a natural isomorphism between $\mathcal{F}$ and the vector space $\mathcal{O}^N$, from which $\mathcal{F}$ inherits its vector space structure.

\textbf{Inner Product Validation.}
We define a weighted inner product on $\mathcal{F}$ as:
\begin{equation}
    \langle f_1, f_2 \rangle_H := \sum_{i=1}^{N} \mu(x_i) \langle f_1(x_i), f_2(x_i) \rangle_{\mathcal{O}}
\end{equation}
where $\langle \cdot, \cdot \rangle_{\mathcal{O}}$ is the standard dot product on $\mathcal{O}=\mathbb{R}^N$. This function inherits linearity and symmetry from the standard dot product. Positive-definiteness is guaranteed by the condition that $\mu(x_i) > 0$ for all $i$. The distance metric induced by this inner product is precisely $d_H$:
\begin{equation}
    d_H(f_1, f_2) := \sqrt{\langle f_1 - f_2, f_1 - f_2 \rangle_H} = \sqrt{\sum_{i=1}^{N} \mu(x_i) \| f_1(x_i) - f_2(x_i) \|_2^2}
\end{equation}
As a complete inner product space, $(\mathcal{F},d_H)$ is a Hilbert space.
\end{proof}

\subsection{Proof of Theorem~\ref{thm:dna_existence}}\label{apdx:proof_dna_existence}

\thmdnaexistence*
\begin{proof}
We aim to construct a mapping from the LLM function space $\mathcal{F}$ to a low-dimensional DNA space $\mathcal{D}$ that satisfies the bi-Lipschitz condition in Definition~\ref{def:dna_embedding} for any given constants $c_2 > c_1 > 0$. This is achieved by applying the Johnson-Lindenstrauss (JL) Lemma.

First, we define a symmetric distortion parameter $\epsilon$ and a scaling factor $\alpha$ from our target constants $c_1$ and $c_2$:
\begin{equation}
    \epsilon = \frac{c_2 - c_1}{c_2 + c_1},\quad \alpha = \frac{c_1 + c_2}{2}
\end{equation}
Since $c_2 > c_1 > 0$, it follows that $\epsilon \in (0, 1)$, which is a valid distortion parameter for the JL Lemma.

We invoke the JL Lemma with this $\epsilon$. The lemma applies because, as shown in Lemma~\ref{lem:is_hilbert}, $(\mathcal{F}, d_H)$ is a Hilbert space, even when $d_H$ is defined with the weighting function $\mu(x)$. The lemma guarantees the existence of a linear map $E: \mathcal{F} \to \mathbb{R}^L$, with dimension $L = O(\epsilon^{-2} \log K)$, that embeds the vector representations of the functions. For any finite set $\{f_1, \dots, f_K\} \subset \mathcal{F}$, the following holds for any pair $f_i, f_j$:
\begin{equation} \label{eq:jl_symmetric_temp}
    (1 - \epsilon) d_H(f_i, f_j) \le \| E(f_i) - E(f_j) \|_2 \le (1 + \epsilon) d_H(f_i, f_j)
\end{equation}
Now, we define our final DNA mapping by scaling the output of $E$. The DNA of an LLM $f$ is $\tau_f := \alpha \cdot E(f)$. The distance in the DNA space is therefore $d_{\tau}(\tau_{f_i}, \tau_{f_j}) = \|\tau_{f_i} - \tau_{f_j}\|_2 = \|\alpha E(f_i) - \alpha E(f_j)\|_2 = \alpha \|E(f_i) - E(f_j)\|_2$.

By multiplying the inequality in Equation~\ref{eq:jl_symmetric_temp} by our scaling factor $\alpha > 0$, we get:
\[
    \alpha(1 - \epsilon) d_H(f_i, f_j) \le \alpha \| E(f_i) - E(f_j) \|_2 \le \alpha(1 + \epsilon) d_H(f_i, f_j)
\]
We can now check the bounds. For the lower bound:
\[ \alpha(1 - \epsilon) = \left(\frac{c_1 + c_2}{2}\right) \left(1 - \frac{c_2 - c_1}{c_2 + c_1}\right) = \left(\frac{c_1 + c_2}{2}\right) \left(\frac{c_2 + c_1 - c_2 + c_1}{c_2 + c_1}\right) = \frac{2c_1}{2} = c_1 \]
For the upper bound:
\[ \alpha(1 + \epsilon) = \left(\frac{c_1 + c_2}{2}\right) \left(1 + \frac{c_2 - c_1}{c_2 + c_1}\right) = \left(\frac{c_1 + c_2}{2}\right) \left(\frac{c_2 + c_1 + c_2 - c_1}{c_2 + c_1}\right) = \frac{2c_2}{2} = c_2 \]
Substituting these results and the definition of $d_\tau$ into our inequality, we obtain:
\[
    c_1 \cdot d_H(f_i, f_j) \le d_{\tau}(\tau_{f_i}, \tau_{f_j}) \le c_2 \cdot d_H(f_i, f_j)
\]
This explicitly provides the required inequality. Thus, a valid DNA mapping exists for any choice of constants $c_2 > c_1 > 0$.
\end{proof}

\subsection{Proof of Theorem~\ref{lem:concentration_bound}}\label{apdx:proof_concentration_bound}

\lemconcentrationbound*
\begin{proof}
Let the sample of $t$ inputs, drawn i.i.d. from the distribution $\mu$, be $\{x_1, \dots, x_t\}$. Let $Z_1, \dots, Z_t$ be a sequence of i.i.d. random variables, where each variable $Z_j$ represents the squared Euclidean distance between the output logit vectors for a randomly drawn input $x_j$:
\[ Z_j := \| f_1(x_j) - f_2(x_j) \|_2^2 \]
By the assumption, each $Z_j$ is bounded within the range $[0, C_{\max}]$. The true squared Hilbert distance is, by definition, the expectation of this random variable:
\[ d_H^2 = \mathbb{E}[Z_j] \]
The sample mean of these $t$ variables is precisely the averaged squared empirical functional distance:
\[ \frac{1}{t}\hat{d}_f^2 = \frac{1}{t} \sum_{j=1}^{t} Z_j \]
Hoeffding's inequality provides a bound on the probability that a sample mean of bounded, independent random variables deviates from its expected value. For a set of i.i.d. variables $\{Y_1, \dots, Y_t\}$ bounded in $[a,b]$, the inequality states:
\[ P\left( \left| \frac{1}{t}\sum_{j=1}^t Y_j - \mathbb{E}[Y] \right| \ge \epsilon \right) \le 2 \exp\left(-\frac{2t\epsilon^2}{(b-a)^2}\right) \]
We apply this inequality to our variables $Z_j$. In our context, the sample mean is $\frac{1}{t}\hat{d}_f^2$, the expectation is $d_H^2$, and the range $[a,b]$ is $[0, C_{\max}]$, making $(b-a)^2 = C_{\max}^2$. Substituting these into the general form directly yields the desired result:
\[ P\left( \left| \frac{1}{t}\hat{d}_f^2 - d_H^2 \right| \ge \epsilon \right) \le 2 \exp\left(-\frac{2 t \epsilon^2}{C_{\max}^2}\right) \]
This completes the proof.
\end{proof}

\subsection{Proof of Corollary~\ref{cor:dna_construction}}

\cordnaconstruct*
\begin{proof}
Since Lemma~\ref{lem:is_hilbert} shows that $\mathcal{F}$ is a Hilbert space, the JL lemma applies. Invoking a standard formulation of the lemma (e.g., Theorem~2.1 in~\cite{dasgupta2003elementary}) for the set of vector representations of LLMs in $\mathcal{F}_K$ yields the desired projection and associated probability bound.
\end{proof}

\section{Experimental Details}\label{apdx:exp-detail}

\subsection{Hyperparameters}\label{subsec:hyperparam}
Unless otherwise specified, we adopt the following default configuration. The DNA dimensionality is reduced to 128 using Gaussian Random Projection by default. Text embeddings are extracted with the state-of-the-art open-source encoder \texttt{Qwen/Qwen3-Embedding-8B}, using a maximum input length of 1024 tokens with \texttt{cls} pooling. This model is also used to encode questions in routing experiment for semantic consistency. For text generation, we set \texttt{max\_length}~=~1024, \texttt{temperature}~=~0.7, \texttt{top\_p}~=~0.9, if these options are applicable. Models with fewer than 7B parameters are run in BF16/FP16 precision without quantization, while larger models are automatically quantized to 8-bit for memory efficiency. For instruction-tuned models, the provided chat template is applied and only the responses are used for DNA extraction. All other parameters follow HuggingFace defaults. All applicable experiments are conducted on five seeds with mean and standard deviation reported.

\subsection{Environment}\label{subsec:environment}
DNA extraction for each model is conducted on a single GPU. Models with up to 30B parameters are executed on NVIDIA A100 or H100 GPUs with 80GB memory, while models between 30B and 70B parameters are deployed on H200 GPUs with 140GB memory. In total, the storage footprint of the \nummodels{} models amounts to approximately 20~TB.

\section{Supplementary Experiments}\label{apdx:exp-add}
\subsection{Phylogenetic Tree of LLMs}\label{subsec:phylo-tree-llm}

Analogous to phylogenetic research in biology, once the DNA of LLMs is available, we can construct a phylogenetic tree to visualize how LLMs evolve. Unlike existing phylogenetic trees based on release histories or documentation, our phylogenetic tree is derived directly from model behavior. This enables us to uncover latent relationships between models, including close connections that may not have been explicitly documented.

To construct the phylogenetic tree, we adopt the widely used \emph{Neighbor-Joining (NJ)} method from biology \citep{saitou1987neighbor}. The core assumption of this method is to find a tree that minimizes the total ``evolutionary effort,'' quantified as the sum of edge lengths. The evolutionary root is estimated by the midpoint of the longest path in the tree. We apply this method to LLM DNA by treating each LLM as a species and using Euclidean distance between DNA vectors as the pairwise distance measure. As an expanded version of Figure~\ref{fig:phylo-tree-group}, the full phylogenetic tree is shown in Figure~\ref{fig:llm_phylo_tree}, from which several interesting observations emerge: 
\begin{enumerate}[leftmargin=*, itemsep=0pt]
    \item \textbf{LLMs from the same family are consistently colocated} (e.g., Google Flan models, Qwen3 models, Llama~3 models), confirming that the DNA representation captures series-level similarity.
    \item \textbf{The DNA-based model tree aligns with the temporal evolution of LLMs}, progressing (i) from Gemma to Gemma~3 along root-to-leaf paths and (ii) from early encoder--decoder architectures (e.g., Flan-T5) to prevalent decoder-only architectures. This indicates that DNA is an effective tool for studying LLM evolution.
\end{enumerate}

\begin{figure}[ht]
    \centering
    \includegraphics[width=0.95\linewidth]{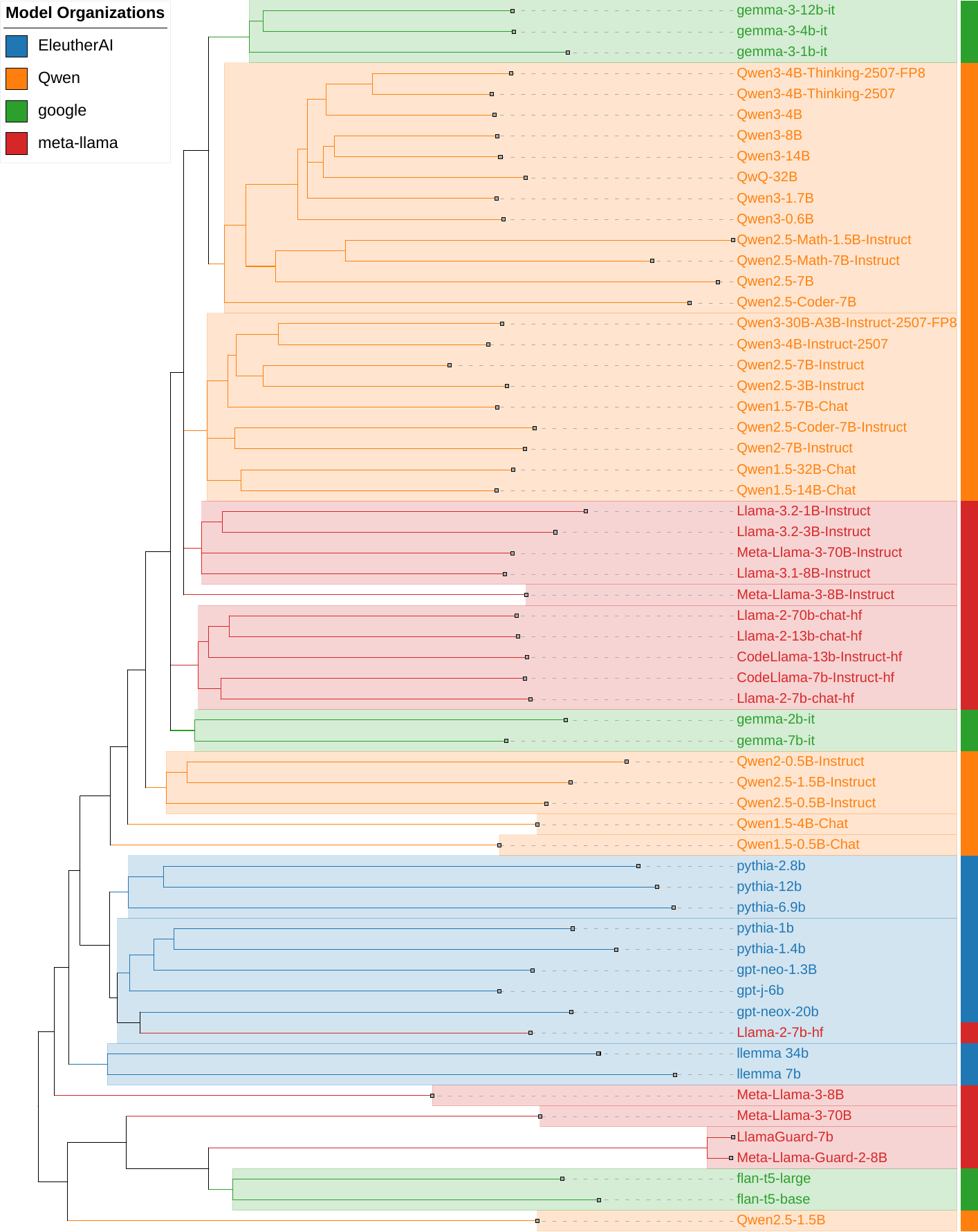}
    \caption{Phylogenetic tree of representative LLMs constructed using the Neighbor-Joining algorithm on their DNA embeddings.}
    \label{fig:llm_phylo_tree}
\end{figure}

\subsection{Detailed Analysis of Falsely Predicted LLM Relationship}\label{subsec:false-pred-relation}

To investigate the causes of DNA false positives, we manually examined these cases with notable examples listed in Table~\ref{tab:dna-fp-samples}. We found that a substantial portion involve pairs absent from the Hugging Face model tree yet verifiably related based on public documentation, or belonging to the same families and thus likely related. The pairs listed in Table~\ref{tab:dna-fp-samples} confirm that many ``false positives'' arise from incomplete labeling of the model-tree data rather than from the DNA method itself. Moreover, LLM DNA offers a principled way to enrich the model tree.

\begin{table}[ht]
    \centering
    \small
    \caption{Examples of Undocumented Relationships Identified by LLM DNA}
    \label{tab:dna-fp-samples}
    \begin{tabular}{ll}
    \toprule
    \textbf{Model 1} & \textbf{Model 2}  \\
    \midrule
  Qwen/Qwen2.5-7B-Instruct & Qwen/Qwen3-1.7B \\
  Qwen/Qwen2.5-7B-Instruct & Qwen/Qwen3-14B \\
  Qwen/Qwen2.5-7B-Instruct & Qwen/Qwen3-8B \\
  Qwen/Qwen2.5-Coder-7B-Instruct & Qwen/Qwen3-4B-Thinking-2507-FP8 \\
  Qwen/Qwen3-4B & Qwen/Qwen2.5-Coder-7B \\
  Qwen/Qwen3-4B & Qwen/Qwen3-8B \\
  Qwen/Qwen3-4B-Thinking-2507-FP8 & Qwen/Qwen2.5-0.5B-Instruct \\
  Qwen/Qwen3-4B-Thinking-2507-FP8 & Qwen/Qwen2.5-7B \\
  Qwen/Qwen3-8B & Qwen/Qwen3-4B-Thinking-2507-FP8 \\
  prithivMLmods/rStar-Coder-Qwen3-0.6B & Qwen/Qwen2.5-Coder-7B \\
  prithivMLmods/rStar-Coder-Qwen3-0.6B & janhq/Jan-v1-4B \\
  prithivMLmods/rStar-Coder-Qwen3-0.6B & meta-llama/Meta-Llama-3-8B-Instruct \\
  prithivMLmods/rStar-Coder-Qwen3-0.6B & openai/gpt-oss-20b \\
  tiiuae/falcon-7b-instruct & Qwen/Qwen2.5-Coder-7B \\
  tomg-group-umd/DynaGuard-8B & Qwen/Qwen3-14B \\
  \bottomrule
    \end{tabular}
\end{table}

\begin{revblock}

\subsection{Ablation Study of Bottleneck Embedding Model}\label{subsec:abl-bottleneck}

This ablation study evaluates the effect of the bottleneck embedding model on the computed DNA. In addition to Qwen-8B-Embedding, we compute DNA using two popular sentence-embedding models from different organizations and with different sizes: \texttt{all-mpnet-base-v2} (0.1B parameters) and \texttt{bge-large-en-v1.5} (0.3B parameters). We then perform a Mantel test—a standard statistical test for correlating two distance matrices—pairwise across the three DNA variants. Specifically, we sample 10,000 random DNA pairs, compute their distances, and report the Spearman correlation and p-value in Figure~\ref{fig:abl-bottleneck-model}. \textbf{All pairs of embedding model exhibit strong agreement, with high correlations ($r>0.75$) and strong statistical significance ($p<0.0001$).} These results indicate that the choice of embedding model has marginal impact on the overall DNA structure; even a small (0.1B) embedding model yields comparable DNA.

\begin{figure}[ht]
    \centering
    \includegraphics[width=0.98\linewidth]{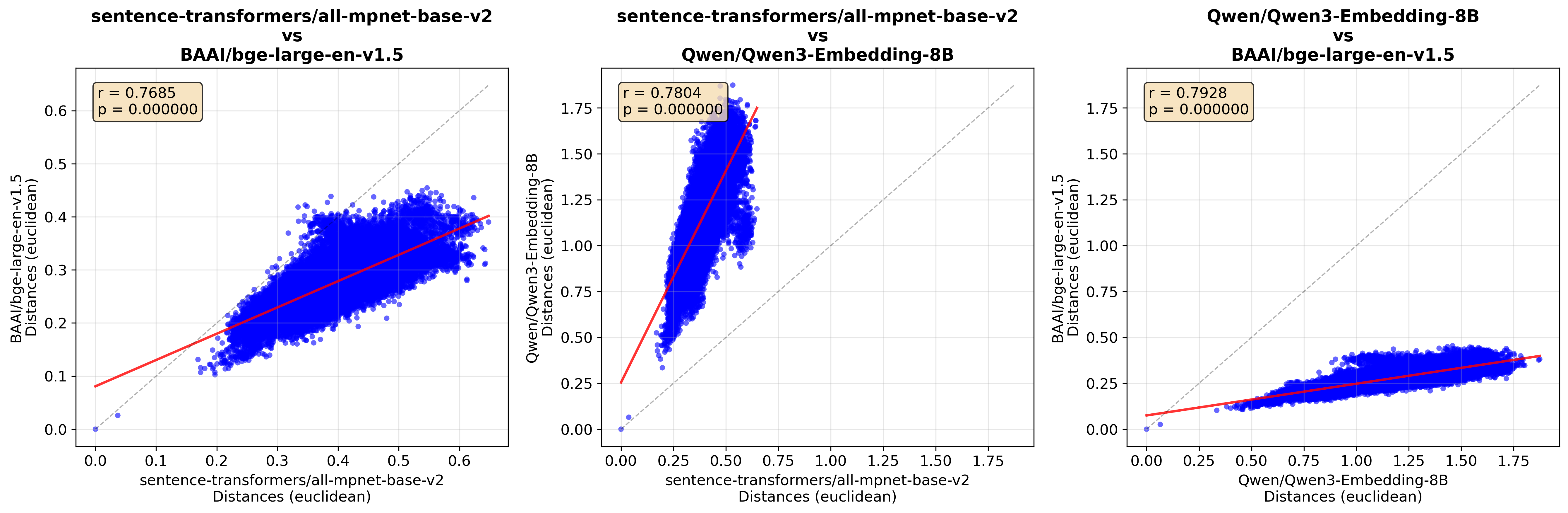}
    \caption{Mantel test result across three different bottleneck sentence-embedding models}
    \label{fig:abl-bottleneck-model}
\end{figure}

\subsection{Ablation Study of the Embedding Model's Output Dimension $p$}\label{subsec:abl-embed-out-dim}

This ablation study investigates the effect of the pre-aggregation embedding dimension $p$ from Qwen3-8B-Embedding on DNA quality. We vary $p \in {4, 64, 128, 1024, 4096}$ and evaluate each setting on the same relationship prediction task used in Table~\ref{tab:dna-relation}. The results are shown in Figure~\ref{fig:abl-preagg-dim}. Across all metrics, performance remains largely unchanged, indicating that $p$ has only a marginal impact. This suggests that \textbf{DNA quality is not primarily driven by the embedding model’s output dimension, but by the subsequent aggregation process}. It also reinforces the observation in Table~\ref{tab:dna-relation} that even simple sentence-embedding models can achieve strong relation prediction performance.

\begin{figure}[ht]
    \centering
    \includegraphics[width=0.98\linewidth]{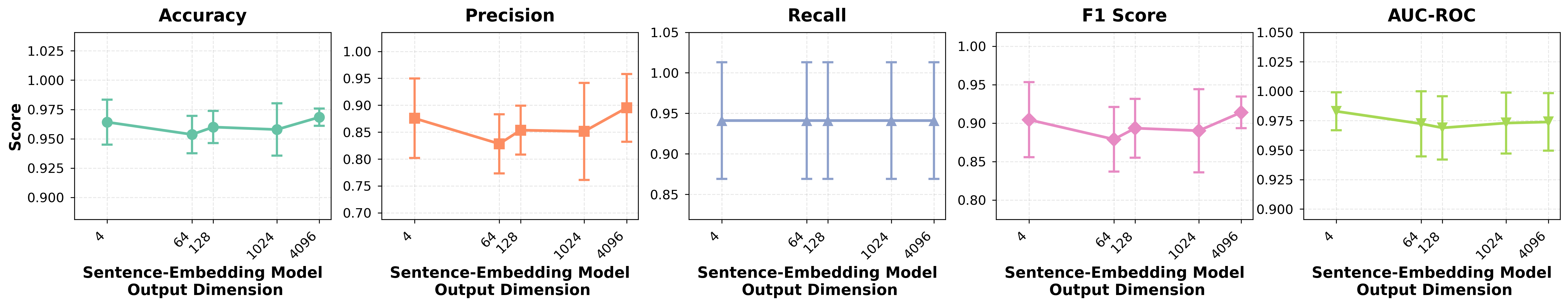}
    \caption{Relationship prediction performance (Table~\ref{tab:dna-relation}) under different output dimensions $p$ of the sentence-embedding model. Results are averaged over five random seeds; error bars denote standard deviation.}
    \label{fig:abl-preagg-dim}
\end{figure}

\subsection{Ablation Study of the DNA Dimension $L$}\label{subsec:abl-dna-dim}

This ablation study examines how the DNA dimension $L$ affects DNA quality. We vary $L$ in ${4, 32, 128, 1024, 4096}$ and evaluate each setting on the same relationship prediction task used in Table~\ref{tab:dna-relation}. The results are shown in Figure~\ref{fig:abl-dna-dim}. \textbf{DNA quality improves as $L$ increases and converges around $L=128$.} Very low dimensions (e.g., $L=4$) underperform because they discard structural information from the original high-dimensional space. Thus, selecting $L$ involves a \textbf{tradeoff between quality and efficiency}: larger dimensions preserve more structural detail and improve accuracy, but incurs increasing downstream computation and storage costs; smaller dimensions are cheaper and faster, but can overly compress the space and lose critical structure, leading to degraded DNA quality. We therefore use $L=128$ in all experiments. Note that this choice is specific to our current evaluation scale of approximately 300 LLMs; for substantially larger model sets, a larger $L$ may be required.

\begin{figure}[ht]
\centering
\includegraphics[width=0.98\linewidth]{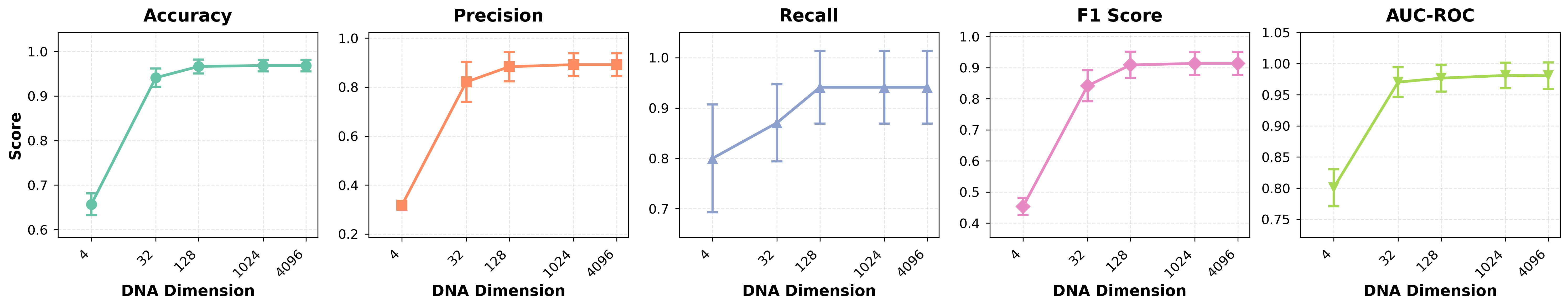}
\caption{Relationship prediction performance (Table~\ref{tab:dna-relation}) under different DNA dimensions $L$. Results are averaged over five random seeds; error bars denote standard deviation.}
\label{fig:abl-dna-dim}
\end{figure}

\subsection{Effect of LLM's Parameter Size on DNA Quality}\label{subsec:llm-size-effect-dna}

This subsection investigates whether the performance of LLM DNA is biased by model scale—specifically, whether the method effectiveness varies significantly between small and large models. To rigorously test this, we conduct a post-hoc error analysis completely separate from the DNA classification pipeline. The DNA classifier first predicts relationships using only functional embeddings, without access to metadata such as parameter counts. After fixing these predictions, we calculate the \textit{cumulative parameter size} for every test pair (the sum of parameters of both models). We then compute the \textit{Point-Biserial correlation ($r_{pb}$)} and fit an independent logistic regression model to determine if this cumulative size can predict the \textit{correctness} of the DNA classifier's output (1 for correct, 0 for incorrect). Table~\ref{tab:modelsize-corr} reports the performance metrics alongside the correlation coefficient and the $p$-value of the size coefficient derived from the regression analysis.

The results yield two key observations. First, \textbf{DNA quality is statistically independent of LLM size}: across all DNA-based methods, the Point-Biserial correlations are negligible (ranging from -0.006 to 0.023), and the $p$-values for the size coefficient are substantial (ranging from 0.63 to 0.69), well above the standard significance threshold of 0.05. This indicates that the method's ability to detect evolutionary relationships does not degrade for larger models nor improve for smaller ones. Second, \textbf{robustness is consistent across different encoders regardless of their own size}: the lack of size bias (high $p$-value) remains constant across the large Qwen3 (8B) encoder and the much smaller BGE (0.3B) and MPNet (0.1B) encoders. This suggests that LLM DNA captures intrinsic functional behaviors that persist across model scales, rather than relying on complexity metrics associated with parameter count.

\begin{table}[ht]
\centering
\small
\caption{Point-Biserial correlation ($r_{pb}$) and statistical significance ($p$-value) between LLM size and relation prediction accuracy}
\label{tab:modelsize-corr}
\begin{tabular}{lccccc}
\toprule
\textbf{Method} & \textbf{Accuracy} & \textbf{F1} & \textbf{AUC} & \textbf{Corr. ($r_{pb}$)} & \textbf{$p$-value (Size)} \\
\midrule
PhyloLM & 0.742 \textpm 0.058 & 0.766 \textpm 0.057 & 0.788 \textpm 0.060 & -0.097 \textpm 0.195 & 0.5202 \\
DNA (Qwen3) & 0.919 \textpm 0.048 & 0.925 \textpm 0.047 & 0.979 \textpm 0.027 & 0.025 \textpm 0.123 & 0.6349 \\
DNA (BGE) & 0.934 \textpm 0.053 & 0.940 \textpm 0.050 & 0.992 \textpm 0.023 & 0.013 \textpm 0.125 & 0.6550 \\
DNA (MPNet) & 0.932 \textpm 0.048 & 0.937 \textpm 0.047 & 0.990 \textpm 0.020 & 0.023 \textpm 0.122 & 0.6903 \\
\bottomrule
\end{tabular}
\end{table}

\subsection{Relation Prediction for LLMs with Same Tokenizer}\label{subsec:abl-tokenizer}

Though our state-of-the-art baseline PhyloLM~\citep{yax2024phylolm} is designed to compare general LLMs across both same- and different-tokenizer settings, it introduces different comparison strategies for these two cases. In particular, when models share a tokenizer, PhyloLM can directly leverage exact token distributions, which typically yields stronger performance than its cross-tokenizer variant based on the first four characters (as shown in Table~\ref{tab:dna-relation}). To provide a fair and transparent comparison under PhyloLM’s same-tokenizer regime, we further compare PhyloLM with LLM DNA on a single-tokenizer subset. This setting is more challenging because models with the same tokenizer are typically closer in function and thus harder to distinguish. Concretely, we filter the dataset to include only models using Qwen2Tokenizer, the most commonly adopted tokenizer among our evaluated LLMs, and perform the same relation prediction task. The results are reported in Table~\ref{tab:dna-relation-Qwen2Tokenizer}.

Comparing Table~\ref{tab:dna-relation-Qwen2Tokenizer} with Table~\ref{tab:dna-relation} yields several key findings. First, \textbf{all methods except PhyloLM exhibit a clear performance drop in the same-tokenizer setting}: for example, DNA (MPNet) decreases from 0.93 to 0.86 in accuracy, and Greedy drops from 0.59 to 0.42. In contrast, PhyloLM slightly improves its accuracy (0.742 to 0.755) and achieves a notable precision gain (0.74 to 0.83), confirming that the same-tokenizer case is indeed more challenging while PhyloLM remains robust. Second, \textbf{despite the reduced gap, PhyloLM is still consistently outperformed by DNA}. This may be because PhyloLM primarily evaluates the distribution of the first token, which may be insufficient to capture long-range sentence-level distributions that are better reflected in DNA. We note that this comparison used Qwen2Tokenizer to align the two methods instead of using the first 4 characters which is technically a slightly different version of PhyloLM. We also emphasize that the comparison is conducted on relation prediction, a task native to LLM DNA rather than the primary focus of PhyloLM. This does not diminish PhyloLM’s demonstrated effectiveness in its intended applications, such as benchmark performance prediction. In future work, to enable a fairer and more comprehensive comparison, we will evaluate both methods on the datasets and tasks originally used to develop PhyloLM, such as model performance prediction benchmarks. Furthermore, we will investigate more deeply how sentence-level, long-range patterns affect LLM representations.

\begin{table}[ht]
\centering
\small
\caption{DNA relation pediction performance for LLMs with Qwen2Tokenizer}
\label{tab:dna-relation-Qwen2Tokenizer}
\begin{tabular}{lccccc}
\toprule
\textbf{Method} & \textbf{Accuracy} & \textbf{Precision} & \textbf{Recall} & \textbf{F1} & \textbf{AUC} \\
\midrule
Random & 0.511 \textpm 0.054 & 0.682 \textpm 0.073 & 0.529 \textpm 0.054 & 0.593 \textpm 0.047 & 0.501 \textpm 0.068 \\
Greedy & 0.428 \textpm 0.043 & 0.687 \textpm 0.101 & 0.294 \textpm 0.000 & 0.410 \textpm 0.018 & 0.500 \textpm 0.049 \\
PhyloLM + SVM & 0.755 \textpm 0.136 & 0.832 \textpm 0.080 & 0.800 \textpm 0.204 & 0.801 \textpm 0.149 & 0.760 \textpm 0.180 \\
\midrule
DNA (Qwen3) + SVM & 0.834 \textpm 0.092 & 0.863 \textpm 0.073 & 0.907 \textpm 0.123 & 0.878 \textpm 0.072 & 0.926 \textpm 0.105 \\
DNA (BGE) + SVM & \textbf{0.869 \textpm 0.078} & \underline{0.865 \textpm 0.069} & \textbf{0.966 \textpm 0.094} & \textbf{0.909 \textpm 0.059} & \underline{0.944 \textpm 0.087} \\
DNA (MPNet) + SVM & \underline{0.866 \textpm 0.088} & \textbf{0.879 \textpm 0.099} & \underline{0.948 \textpm 0.098} & \underline{0.906 \textpm 0.065} & \textbf{0.960 \textpm 0.079} \\
\bottomrule
\end{tabular}
\end{table}

\end{revblock}

\begin{revblock}
\section{Broader Impact}\label{apdx:broader-impact}
The utility of the phylogenetic tree extends beyond historical visualization, offering actionable insights into three key aspects of model management.

\paragraph{License Governance.}
As the number of models grows rapidly, non-compliance with model licenses is becoming a critical concern. Such behaviors include fine-tuning or distillation from a model without giving proper credit or respecting license terms. For example, it has been alleged that certain proprietary models are derived from open-source counterparts like Qwen-2.5-14B, a claim often difficult to verify through metadata alone. While existing license-analysis tools such as ModelGo primarily rely on explicit claims, our phylogenetic analysis provides a functional indication of lineage. Similar methods have been explored in other domains; for instance, RAI2~\citep{dong2023rai2} develops identification techniques for Computer Vision (CV) models to accurately identify model provenance and detect unauthorized reuse. In the language domain, our tree clusters models like Microsoft's orca-2-13b and LMSYS's vicuna-7b-v1.1 tightly, identifying a shared lineage that is not explicitly documented in Hugging Face. Ultimately, the phylogenetic tree offers a data-driven framework for flagging potential license violations and strengthening governance across the AI ecosystem.

\paragraph{Model Selection.} Our framework facilitates informed model selection by analyzing the interplay between functional inheritance and iteration velocity. First, the phylogenetic lineage allows users to {identify families where specialized capabilities}—such as reasoning patterns~\citep{huang2023towards} or domain knowledge~\citep{wu2025model}—are likely transferred down from ancestor models, ensuring the retention of desired baseline traits. Second, the {``evolutionary rate'' (branch length) serves as a proxy for \textit{Iteration Velocity}}, quantifying the magnitude of functional change between versions. This reveals a fundamental {trade-off between novelty and stability}: while a high iteration velocity offers significant behavioral shifts advantageous for \textit{experimental use cases}, models with lower velocity (higher backward compatibility) are often preferable for \textit{production environments}—particularly multi-agent systems—to minimize the risk of breaking established agent workflows or prompt dependencies.

\paragraph{Risk Mitigation.}
Finally, the phylogenetic tree facilitates proactive {risk mitigation by tracing the propagation of safety risks}. Our findings on inheritance suggest that vulnerabilities—such as implanted backdoors~\citep{yang2024watch}, jailbreak susceptibility~\citep{russinovich2025great}, or deception behaviors~\citep{wu2025beyond}—are probably transferred down model lineages. Consequently, the tree allows auditors to propagate known risk profiles from ancestor models to their descendants. If a parent model is flagged as unsafe, its descendants identified by the tree can be immediately prioritized for safety interventions. This enables auditors to mitigate deployment risks efficiently, focusing resources on high-risk lineages without requiring exhaustive ground-up testing for every new model iteration.

\section{Limitations}\label{apdx:limitation}
This section discusses the limitations of our work.

\paragraph{Traits and DNA subsequences.}
Our formal definition of LLM DNA does not assign mathematical meaning to DNA subsequences. However, in Section~\ref{subsec:exp-finetune-dna}, we observe that fine-tuning alters only specific subsequences while leaving the overall pattern largely intact. This suggests that {certain DNA subsequences may encode distinct model traits} (e.g., mathematical reasoning). A theoretical characterization of how different evolutions affect such traits is an important and challenging direction for future work. In the long run, this could enable more precise understanding, management, and even targeted modification of LLM DNA, drawing a closer parallel to how biogenetics studies and manipulates inherited properties.

\paragraph{Adaptive attacks on DNA.}
Our current DNA extraction is not designed to be robust against adaptive attacks. If an attacker knows the extraction data, they could deliberately train a model toward a malicious objective—e.g., producing a substantially different DNA while functionally copying another model—in order to evade lineage detection, patents, or license constraints. A straightforward mitigation is to keep extraction datasets closed-source or to re-extract DNA using a fresh dataset when needed. A more principled direction is to scale the extraction set so that overfitting to it becomes prohibitively expensive and would noticeably degrade general performance. Developing such security guarantees remains a key topic for future work.

\paragraph{Data bias and contamination.}
Our DNA extraction relies on six public datasets, which may introduce evaluation bias or contamination. Bias could cause us to overlook certain differences or overestimate similarity among some model groups, while contamination could arise if parts of the extraction data were seen during pretraining. {Nonetheless, their impact on our evaluation is limited.} As shown in Section~\ref{subsec:exp-rand-input}, DNA remains effective even when evaluated on purely random input data. This observation supports that DNA primarily captures \emph{relative relationships} among LLMs, rather than depending on absolute accuracy on any particular dataset. One remaining challenge is that certain model APIs, such as the Claude series, may reject outputs generated from random strings; these cases require more advanced randomized methods to bypass such safety or formatting barriers.

\end{revblock}

\section{Model Details}

In this section, we present the full list of models used, including their architectures, parameter counts, and licenses. This paper focuses on text-generation models, including decoder-only models (e.g., GPT) and encoder-decoder models (e.g., T5). For completion models, the completed text is treated as the response; for instruction-tuned models, the entire response extracted from the template is used. This may include any model-generated reasoning traces (e.g., in Qwen3), which are also considered part of the response. All uses comply with the respective model licenses.

\begin{table}[h]
\centering
\caption{Full list of used \nummodels{} models, including their architectures, parameter counts, and licenses}
\label{tab:models-1ist}
\scriptsize
\begin{tabular}{>{\raggedright\arraybackslash}p{5.5cm} >{\raggedright\arraybackslash}p{1.8cm} >{\raggedright\arraybackslash}p{1.3cm} >{\raggedright\arraybackslash}p{3cm}}
\toprule
\textbf{Model} & \textbf{Architecture} & \textbf{Parameters} & \textbf{License} \\
\midrule
\mbox{\truncate{5.5cm}{01-ai/Yi-1.5-34B-Chat}} & \mbox{\truncate{1.8cm}{Decoder Only}} & \mbox{\truncate{1.3cm}{34B}} & \mbox{\truncate{3cm}{Apache-2.0}} \\
\mbox{\truncate{5.5cm}{01-ai/Yi-1.5-9B-Chat}} & \mbox{\truncate{1.8cm}{Decoder Only}} & \mbox{\truncate{1.3cm}{9B}} & \mbox{\truncate{3cm}{Apache-2.0}} \\
\mbox{\truncate{5.5cm}{01-ai/Yi-34B-Chat}} & \mbox{\truncate{1.8cm}{Decoder Only}} & \mbox{\truncate{1.3cm}{34B}} & \mbox{\truncate{3cm}{Apache-2.0}} \\
\mbox{\truncate{5.5cm}{01-ai/Yi-6B}} & \mbox{\truncate{1.8cm}{Decoder Only}} & \mbox{\truncate{1.3cm}{6B}} & \mbox{\truncate{3cm}{Apache-2.0}} \\
\mbox{\truncate{5.5cm}{01-ai/Yi-6B-200K}} & \mbox{\truncate{1.8cm}{Decoder Only}} & \mbox{\truncate{1.3cm}{6B}} & \mbox{\truncate{3cm}{Apache-2.0}} \\
\mbox{\truncate{5.5cm}{01-ai/Yi-6B-Chat}} & \mbox{\truncate{1.8cm}{Decoder Only}} & \mbox{\truncate{1.3cm}{6B}} & \mbox{\truncate{3cm}{Apache-2.0}} \\
\mbox{\truncate{5.5cm}{01-ai/Yi-9B-200K}} & \mbox{\truncate{1.8cm}{Decoder Only}} & \mbox{\truncate{1.3cm}{9B}} & \mbox{\truncate{3cm}{Apache-2.0}} \\
\mbox{\truncate{5.5cm}{AdaptLLM/finance-chat}} & \mbox{\truncate{1.8cm}{Decoder Only}} & \mbox{\truncate{1.3cm}{7B}} & \mbox{\truncate{3cm}{Llama-2}} \\
\mbox{\truncate{5.5cm}{AdaptLLM/medicine-LLM}} & \mbox{\truncate{1.8cm}{Decoder Only}} & \mbox{\truncate{1.3cm}{7B}} & \mbox{\truncate{3cm}{Unknown}} \\
\mbox{\truncate{5.5cm}{AdaptLLM/medicine-LLM-13B}} & \mbox{\truncate{1.8cm}{Decoder Only}} & \mbox{\truncate{1.3cm}{13B}} & \mbox{\truncate{3cm}{Apache-2.0}} \\
\mbox{\truncate{5.5cm}{AdaptLLM/medicine-chat}} & \mbox{\truncate{1.8cm}{Decoder Only}} & \mbox{\truncate{1.3cm}{7B}} & \mbox{\truncate{3cm}{Llama-2}} \\
\mbox{\truncate{5.5cm}{Aurore-Reveil/Koto-Small-7B-IT}} & \mbox{\truncate{1.8cm}{Decoder Only}} & \mbox{\truncate{1.3cm}{8B}} & \mbox{\truncate{3cm}{MIT}} \\
\mbox{\truncate{5.5cm}{BioMistral/BioMistral-7B}} & \mbox{\truncate{1.8cm}{Decoder Only}} & \mbox{\truncate{1.3cm}{7B}} & \mbox{\truncate{3cm}{Apache-2.0}} \\
\mbox{\truncate{5.5cm}{BioMistral/BioMistral-7B-DARE}} & \mbox{\truncate{1.8cm}{Decoder Only}} & \mbox{\truncate{1.3cm}{7B}} & \mbox{\truncate{3cm}{Apache-2.0}} \\
\mbox{\truncate{5.5cm}{Biomimicry-AI/ANIMA-Nectar-v2}} & \mbox{\truncate{1.8cm}{Decoder Only}} & \mbox{\truncate{1.3cm}{7B}} & \mbox{\truncate{3cm}{MIT}} \\
\mbox{\truncate{5.5cm}{ByteDance-Seed/Seed-Coder-8B-Instruct}} & \mbox{\truncate{1.8cm}{Decoder Only}} & \mbox{\truncate{1.3cm}{8B}} & \mbox{\truncate{3cm}{MIT}} \\
\mbox{\truncate{5.5cm}{CausalLM/34b-beta}} & \mbox{\truncate{1.8cm}{Decoder Only}} & \mbox{\truncate{1.3cm}{34B}} & \mbox{\truncate{3cm}{GPL-3.0}} \\
\mbox{\truncate{5.5cm}{ChrisMcCormick/deepseek-tiny-v0.1}} & \mbox{\truncate{1.8cm}{Decoder Only}} & \mbox{\truncate{1.3cm}{17.1M}} & \mbox{\truncate{3cm}{Apache-2.0}} \\
\mbox{\truncate{5.5cm}{ClosedCharacter/Peach-2.0-9B-8k-Roleplay}} & \mbox{\truncate{1.8cm}{Decoder Only}} & \mbox{\truncate{1.3cm}{9B}} & \mbox{\truncate{3cm}{MIT}} \\
\mbox{\truncate{5.5cm}{ConvexAI/Luminex-34B-v0.1}} & \mbox{\truncate{1.8cm}{Decoder Only}} & \mbox{\truncate{1.3cm}{34B}} & \mbox{\truncate{3cm}{Other}} \\
\mbox{\truncate{5.5cm}{ConvexAI/Luminex-34B-v0.2}} & \mbox{\truncate{1.8cm}{Decoder Only}} & \mbox{\truncate{1.3cm}{34B}} & \mbox{\truncate{3cm}{Other}} \\
\mbox{\truncate{5.5cm}{CorticalStack/pastiche-crown-clown-7b-dare-dpo}} & \mbox{\truncate{1.8cm}{Decoder Only}} & \mbox{\truncate{1.3cm}{7B}} & \mbox{\truncate{3cm}{Apache-2.0}} \\
\mbox{\truncate{5.5cm}{CultriX/NeuralTrix-bf16}} & \mbox{\truncate{1.8cm}{Decoder Only}} & \mbox{\truncate{1.3cm}{7B}} & \mbox{\truncate{3cm}{Apache-2.0}} \\
\mbox{\truncate{5.5cm}{DatarusAI/Datarus-R1-14B-preview}} & \mbox{\truncate{1.8cm}{Decoder Only}} & \mbox{\truncate{1.3cm}{15B}} & \mbox{\truncate{3cm}{Apache-2.0}} \\
\mbox{\truncate{5.5cm}{DavidAU/Qwen3-Horror-Instruct-Uncensored-262K-ctx-4B}} & \mbox{\truncate{1.8cm}{Decoder Only}} & \mbox{\truncate{1.3cm}{4B}} & \mbox{\truncate{3cm}{Apache-2.0}} \\
\mbox{\truncate{5.5cm}{DeepHat/DeepHat-V1-7B}} & \mbox{\truncate{1.8cm}{Decoder Only}} & \mbox{\truncate{1.3cm}{8B}} & \mbox{\truncate{3cm}{Apache-2.0}} \\
\mbox{\truncate{5.5cm}{EleutherAI/gpt-j-6b}} & \mbox{\truncate{1.8cm}{Decoder Only}} & \mbox{\truncate{1.3cm}{6B}} & \mbox{\truncate{3cm}{Apache-2.0}} \\
\mbox{\truncate{5.5cm}{EleutherAI/gpt-neo-1.3B}} & \mbox{\truncate{1.8cm}{Decoder Only}} & \mbox{\truncate{1.3cm}{1B}} & \mbox{\truncate{3cm}{MIT}} \\
\mbox{\truncate{5.5cm}{EleutherAI/gpt-neox-20b}} & \mbox{\truncate{1.8cm}{Decoder Only}} & \mbox{\truncate{1.3cm}{21B}} & \mbox{\truncate{3cm}{Apache-2.0}} \\
\mbox{\truncate{5.5cm}{EleutherAI/llemma\_34b}} & \mbox{\truncate{1.8cm}{Decoder Only}} & \mbox{\truncate{1.3cm}{34B}} & \mbox{\truncate{3cm}{Llama-2}} \\

\mbox{\truncate{5.5cm}{EleutherAI/llemma\_7b}} & \mbox{\truncate{1.8cm}{Decoder Only}} & \mbox{\truncate{1.3cm}{7B}} & \mbox{\truncate{3cm}{Llama-2}} \\
\mbox{\truncate{5.5cm}{EleutherAI/polyglot-ko-3.8b}} & \mbox{\truncate{1.8cm}{Decoder Only}} & \mbox{\truncate{1.3cm}{3.8B}} & \mbox{\truncate{3cm}{Apache-2.0}} \\
\mbox{\truncate{5.5cm}{EleutherAI/pythia-1.4b}} & \mbox{\truncate{1.8cm}{Decoder Only}} & \mbox{\truncate{1.3cm}{2B}} & \mbox{\truncate{3cm}{Apache-2.0}} \\
\mbox{\truncate{5.5cm}{EleutherAI/pythia-12b}} & \mbox{\truncate{1.8cm}{Decoder Only}} & \mbox{\truncate{1.3cm}{12B}} & \mbox{\truncate{3cm}{Apache-2.0}} \\
\mbox{\truncate{5.5cm}{EleutherAI/pythia-1b}} & \mbox{\truncate{1.8cm}{Decoder Only}} & \mbox{\truncate{1.3cm}{1B}} & \mbox{\truncate{3cm}{Apache-2.0}} \\
\mbox{\truncate{5.5cm}{EleutherAI/pythia-2.8b}} & \mbox{\truncate{1.8cm}{Decoder Only}} & \mbox{\truncate{1.3cm}{3B}} & \mbox{\truncate{3cm}{Apache-2.0}} \\
\mbox{\truncate{5.5cm}{EleutherAI/pythia-6.9b}} & \mbox{\truncate{1.8cm}{Decoder Only}} & \mbox{\truncate{1.3cm}{7B}} & \mbox{\truncate{3cm}{Apache-2.0}} \\
\mbox{\truncate{5.5cm}{FallenMerick/MN-Violet-Lotus-12B}} & \mbox{\truncate{1.8cm}{Decoder Only}} & \mbox{\truncate{1.3cm}{12B}} & \mbox{\truncate{3cm}{CC-BY-4.0}} \\
\mbox{\truncate{5.5cm}{FelixChao/llama2-13b-math1.2}} & \mbox{\truncate{1.8cm}{Decoder Only}} & \mbox{\truncate{1.3cm}{13B}} & \mbox{\truncate{3cm}{Unknown}} \\
\mbox{\truncate{5.5cm}{FelixChao/vicuna-7B-chemical}} & \mbox{\truncate{1.8cm}{Decoder Only}} & \mbox{\truncate{1.3cm}{7B}} & \mbox{\truncate{3cm}{Apache-2.0}} \\
\mbox{\truncate{5.5cm}{FelixChao/vicuna-7B-physics}} & \mbox{\truncate{1.8cm}{Decoder Only}} & \mbox{\truncate{1.3cm}{7B}} & \mbox{\truncate{3cm}{Unknown}} \\
\mbox{\truncate{5.5cm}{FlareRebellion/WeirdCompound-v1.6-24b}} & \mbox{\truncate{1.8cm}{Decoder Only}} & \mbox{\truncate{1.3cm}{24B}} & \mbox{\truncate{3cm}{Unknown}} \\
\mbox{\truncate{5.5cm}{Harshvir/Llama-2-7B-physics}} & \mbox{\truncate{1.8cm}{Decoder Only}} & \mbox{\truncate{1.3cm}{7B}} & \mbox{\truncate{3cm}{Unknown}} \\
\mbox{\truncate{5.5cm}{HuggingFaceH4/zephyr-7b-beta}} & \mbox{\truncate{1.8cm}{Decoder Only}} & \mbox{\truncate{1.3cm}{7B}} & \mbox{\truncate{3cm}{MIT}} \\
\mbox{\truncate{5.5cm}{HuggingFaceTB/SmolLM-135M}} & \mbox{\truncate{1.8cm}{Decoder Only}} & \mbox{\truncate{1.3cm}{0.1B}} & \mbox{\truncate{3cm}{Apache-2.0}} \\
\mbox{\truncate{5.5cm}{HuggingFaceTB/SmolLM2-135M}} & \mbox{\truncate{1.8cm}{Decoder Only}} & \mbox{\truncate{1.3cm}{0.1B}} & \mbox{\truncate{3cm}{Apache-2.0}} \\
\mbox{\truncate{5.5cm}{HuggingFaceTB/SmolLM3-3B}} & \mbox{\truncate{1.8cm}{Decoder Only}} & \mbox{\truncate{1.3cm}{3B}} & \mbox{\truncate{3cm}{Apache-2.0}} \\
\mbox{\truncate{5.5cm}{Intel/neural-chat-7b-v3-3}} & \mbox{\truncate{1.8cm}{Decoder Only}} & \mbox{\truncate{1.3cm}{7B}} & \mbox{\truncate{3cm}{Apache-2.0}} \\
\mbox{\truncate{5.5cm}{K-intelligence/Midm-2.0-Base-Instruct}} & \mbox{\truncate{1.8cm}{Decoder Only}} & \mbox{\truncate{1.3cm}{12B}} & \mbox{\truncate{3cm}{MIT}} \\
\mbox{\truncate{5.5cm}{KoboldAI/GPT-Neo-2.7B-Horni-LN}} & \mbox{\truncate{1.8cm}{Decoder Only}} & \mbox{\truncate{1.3cm}{2.7B}} & \mbox{\truncate{3cm}{Unknown}} \\
\mbox{\truncate{5.5cm}{KoboldAI/OPT-13B-Erebus}} & \mbox{\truncate{1.8cm}{Decoder Only}} & \mbox{\truncate{1.3cm}{13B}} & \mbox{\truncate{3cm}{Other}} \\
\mbox{\truncate{5.5cm}{LGAI-EXAONE/EXAONE-4.0-1.2B}} & \mbox{\truncate{1.8cm}{Decoder Only}} & \mbox{\truncate{1.3cm}{1B}} & \mbox{\truncate{3cm}{Other}} \\
\mbox{\truncate{5.5cm}{LLM360/Amber}} & \mbox{\truncate{1.8cm}{Decoder Only}} & \mbox{\truncate{1.3cm}{7B}} & \mbox{\truncate{3cm}{Apache-2.0}} \\
\mbox{\truncate{5.5cm}{LatitudeGames/Wayfarer-2-12B}} & \mbox{\truncate{1.8cm}{Decoder Only}} & \mbox{\truncate{1.3cm}{12B}} & \mbox{\truncate{3cm}{Apache-2.0}} \\
\mbox{\truncate{5.5cm}{LeoLM/leo-hessianai-13b}} & \mbox{\truncate{1.8cm}{Decoder Only}} & \mbox{\truncate{1.3cm}{13B}} & \mbox{\truncate{3cm}{Unknown}} \\
\mbox{\truncate{5.5cm}{LiquidAI/LFM2-1.2B}} & \mbox{\truncate{1.8cm}{Decoder Only}} & \mbox{\truncate{1.3cm}{1B}} & \mbox{\truncate{3cm}{Other}} \\
\mbox{\truncate{5.5cm}{LiquidAI/LFM2-350M}} & \mbox{\truncate{1.8cm}{Decoder Only}} & \mbox{\truncate{1.3cm}{0.4B}} & \mbox{\truncate{3cm}{Other}} \\
\mbox{\truncate{5.5cm}{MaziyarPanahi/WizardLM-Math-70B-v0.1}} & \mbox{\truncate{1.8cm}{Decoder Only}} & \mbox{\truncate{1.3cm}{69B}} & \mbox{\truncate{3cm}{AGPL-3.0}} \\
\mbox{\truncate{5.5cm}{Neko-Institute-of-Science/metharme-7b}} & \mbox{\truncate{1.8cm}{Decoder Only}} & \mbox{\truncate{1.3cm}{7B}} & \mbox{\truncate{3cm}{Unknown}} \\
\mbox{\truncate{5.5cm}{Neko-Institute-of-Science/pygmalion-7b}} & \mbox{\truncate{1.8cm}{Decoder Only}} & \mbox{\truncate{1.3cm}{7B}} & \mbox{\truncate{3cm}{Unknown}} \\
\mbox{\truncate{5.5cm}{Nexusflow/Starling-LM-7B-beta}} & \mbox{\truncate{1.8cm}{Decoder Only}} & \mbox{\truncate{1.3cm}{7B}} & \mbox{\truncate{3cm}{Apache-2.0}} \\
\mbox{\truncate{5.5cm}{NinedayWang/PolyCoder-2.7B}} & \mbox{\truncate{1.8cm}{Decoder Only}} & \mbox{\truncate{1.3cm}{2.7B}} & \mbox{\truncate{3cm}{Unknown}} \\
\mbox{\truncate{5.5cm}{NousResearch/Hermes-4-14B}} & \mbox{\truncate{1.8cm}{Decoder Only}} & \mbox{\truncate{1.3cm}{425k}} & \mbox{\truncate{3cm}{Apache-2.0}} \\
\mbox{\truncate{5.5cm}{NousResearch/Hermes-4-14B-FP8}} & \mbox{\truncate{1.8cm}{Decoder Only}} & \mbox{\truncate{1.3cm}{15B}} & \mbox{\truncate{3cm}{Apache-2.0}} \\
\mbox{\truncate{5.5cm}{NousResearch/Nous-Hermes-13b}} & \mbox{\truncate{1.8cm}{Decoder Only}} & \mbox{\truncate{1.3cm}{13B}} & \mbox{\truncate{3cm}{gpl}} \\
\mbox{\truncate{5.5cm}{NousResearch/Nous-Hermes-2-SOLAR-10.7B}} & \mbox{\truncate{1.8cm}{Decoder Only}} & \mbox{\truncate{1.3cm}{11B}} & \mbox{\truncate{3cm}{Apache-2.0}} \\
\mbox{\truncate{5.5cm}{NousResearch/Nous-Hermes-2-Yi-34B}} & \mbox{\truncate{1.8cm}{Decoder Only}} & \mbox{\truncate{1.3cm}{34B}} & \mbox{\truncate{3cm}{Apache-2.0}} \\
\mbox{\truncate{5.5cm}{OddTheGreat/Circuitry\_24B\_V.2}} & \mbox{\truncate{1.8cm}{Decoder Only}} & \mbox{\truncate{1.3cm}{24B}} & \mbox{\truncate{3cm}{Unknown}} \\
\mbox{\truncate{5.5cm}{OpenAssistant/oasst-sft-4-pythia-12b-epoch-3.5}} & \mbox{\truncate{1.8cm}{Decoder Only}} & \mbox{\truncate{1.3cm}{12B}} & \mbox{\truncate{3cm}{Apache-2.0}} \\
\mbox{\truncate{5.5cm}{OpenBuddy/openbuddy-codellama2-34b-v11.1-bf16}} & \mbox{\truncate{1.8cm}{Decoder Only}} & \mbox{\truncate{1.3cm}{34B}} & \mbox{\truncate{3cm}{Unknown}} \\
\mbox{\truncate{5.5cm}{PharMolix/BioMedGPT-LM-7B}} & \mbox{\truncate{1.8cm}{Decoder Only}} & \mbox{\truncate{1.3cm}{7B}} & \mbox{\truncate{3cm}{Apache-2.0}} \\
\mbox{\truncate{5.5cm}{Plaban81/Moe-4x7b-math-reason-code}} & \mbox{\truncate{1.8cm}{Decoder Only}} & \mbox{\truncate{1.3cm}{24B}} & \mbox{\truncate{3cm}{Apache-2.0}} \\
\mbox{\truncate{5.5cm}{PocketDoc/Dans-PersonalityEngine-V1.3.0-24b}} & \mbox{\truncate{1.8cm}{Decoder Only}} & \mbox{\truncate{1.3cm}{24B}} & \mbox{\truncate{3cm}{Apache-2.0}} \\
\mbox{\truncate{5.5cm}{PygmalionAI/pygmalion-2.7b}} & \mbox{\truncate{1.8cm}{Decoder Only}} & \mbox{\truncate{1.3cm}{2.7B}} & \mbox{\truncate{3cm}{creativeml-openrail-m}} \\
\mbox{\truncate{5.5cm}{QuixiAI/WizardLM-13B-Uncensored}} & \mbox{\truncate{1.8cm}{Decoder Only}} & \mbox{\truncate{1.3cm}{13B}} & \mbox{\truncate{3cm}{Other}} \\
\bottomrule
\end{tabular}
\normalsize
\end{table}
\newpage
\begin{table}[h]
\centering
\caption*{Table ~\ref{tab:models-1ist}: Full list of used \nummodels{} models (continued)}
\scriptsize
\begin{tabular}{>{\raggedright\arraybackslash}p{5.5cm} >{\raggedright\arraybackslash}p{1.8cm} >{\raggedright\arraybackslash}p{1.3cm} >{\raggedright\arraybackslash}p{3cm}}
\toprule
\textbf{Model} & \textbf{Architecture} & \textbf{Parameters} & \textbf{License} \\
\midrule
\mbox{\truncate{5.5cm}{Qwen/QwQ-32B}} & \mbox{\truncate{1.8cm}{Decoder Only}} & \mbox{\truncate{1.3cm}{33B}} & \mbox{\truncate{3cm}{Apache-2.0}} \\
\mbox{\truncate{5.5cm}{Qwen/Qwen1.5-0.5B-Chat}} & \mbox{\truncate{1.8cm}{Decoder Only}} & \mbox{\truncate{1.3cm}{0.6B}} & \mbox{\truncate{3cm}{Other}} \\
\mbox{\truncate{5.5cm}{Qwen/Qwen1.5-14B-Chat}} & \mbox{\truncate{1.8cm}{Decoder Only}} & \mbox{\truncate{1.3cm}{14B}} & \mbox{\truncate{3cm}{Other}} \\
\mbox{\truncate{5.5cm}{Qwen/Qwen1.5-32B-Chat}} & \mbox{\truncate{1.8cm}{Decoder Only}} & \mbox{\truncate{1.3cm}{33B}} & \mbox{\truncate{3cm}{Other}} \\
\mbox{\truncate{5.5cm}{Qwen/Qwen1.5-4B-Chat}} & \mbox{\truncate{1.8cm}{Decoder Only}} & \mbox{\truncate{1.3cm}{4B}} & \mbox{\truncate{3cm}{Other}} \\
\mbox{\truncate{5.5cm}{Qwen/Qwen1.5-7B-Chat}} & \mbox{\truncate{1.8cm}{Decoder Only}} & \mbox{\truncate{1.3cm}{8B}} & \mbox{\truncate{3cm}{Other}} \\
\mbox{\truncate{5.5cm}{Qwen/Qwen2-0.5B-Instruct}} & \mbox{\truncate{1.8cm}{Decoder Only}} & \mbox{\truncate{1.3cm}{0.5B}} & \mbox{\truncate{3cm}{Apache-2.0}} \\
\mbox{\truncate{5.5cm}{Qwen/Qwen2-7B-Instruct}} & \mbox{\truncate{1.8cm}{Decoder Only}} & \mbox{\truncate{1.3cm}{8B}} & \mbox{\truncate{3cm}{Apache-2.0}} \\
\mbox{\truncate{5.5cm}{Qwen/Qwen2.5-0.5B-Instruct}} & \mbox{\truncate{1.8cm}{Decoder Only}} & \mbox{\truncate{1.3cm}{0.5B}} & \mbox{\truncate{3cm}{Apache-2.0}} \\
\mbox{\truncate{5.5cm}{Qwen/Qwen2.5-1.5B}} & \mbox{\truncate{1.8cm}{Decoder Only}} & \mbox{\truncate{1.3cm}{2B}} & \mbox{\truncate{3cm}{Apache-2.0}} \\
\mbox{\truncate{5.5cm}{Qwen/Qwen2.5-1.5B-Instruct}} & \mbox{\truncate{1.8cm}{Decoder Only}} & \mbox{\truncate{1.3cm}{2B}} & \mbox{\truncate{3cm}{Apache-2.0}} \\
\mbox{\truncate{5.5cm}{Qwen/Qwen2.5-3B-Instruct}} & \mbox{\truncate{1.8cm}{Decoder Only}} & \mbox{\truncate{1.3cm}{3B}} & \mbox{\truncate{3cm}{Other}} \\
\mbox{\truncate{5.5cm}{Qwen/Qwen2.5-7B}} & \mbox{\truncate{1.8cm}{Decoder Only}} & \mbox{\truncate{1.3cm}{8B}} & \mbox{\truncate{3cm}{Apache-2.0}} \\
\mbox{\truncate{5.5cm}{Qwen/Qwen2.5-7B-Instruct}} & \mbox{\truncate{1.8cm}{Decoder Only}} & \mbox{\truncate{1.3cm}{8B}} & \mbox{\truncate{3cm}{Apache-2.0}} \\
\mbox{\truncate{5.5cm}{Qwen/Qwen2.5-Coder-7B}} & \mbox{\truncate{1.8cm}{Decoder Only}} & \mbox{\truncate{1.3cm}{8B}} & \mbox{\truncate{3cm}{Apache-2.0}} \\
\mbox{\truncate{5.5cm}{Qwen/Qwen2.5-Coder-7B-Instruct}} & \mbox{\truncate{1.8cm}{Decoder Only}} & \mbox{\truncate{1.3cm}{8B}} & \mbox{\truncate{3cm}{Apache-2.0}} \\
\mbox{\truncate{5.5cm}{Qwen/Qwen2.5-Math-1.5B-Instruct}} & \mbox{\truncate{1.8cm}{Decoder Only}} & \mbox{\truncate{1.3cm}{2B}} & \mbox{\truncate{3cm}{Apache-2.0}} \\
\mbox{\truncate{5.5cm}{Qwen/Qwen2.5-Math-7B-Instruct}} & \mbox{\truncate{1.8cm}{Decoder Only}} & \mbox{\truncate{1.3cm}{8B}} & \mbox{\truncate{3cm}{Apache-2.0}} \\
\mbox{\truncate{5.5cm}{Qwen/Qwen3-0.6B}} & \mbox{\truncate{1.8cm}{Decoder Only}} & \mbox{\truncate{1.3cm}{0.8B}} & \mbox{\truncate{3cm}{Apache-2.0}} \\
\mbox{\truncate{5.5cm}{Qwen/Qwen3-1.7B}} & \mbox{\truncate{1.8cm}{Decoder Only}} & \mbox{\truncate{1.3cm}{2B}} & \mbox{\truncate{3cm}{Apache-2.0}} \\
\mbox{\truncate{5.5cm}{Qwen/Qwen3-14B}} & \mbox{\truncate{1.8cm}{Decoder Only}} & \mbox{\truncate{1.3cm}{15B}} & \mbox{\truncate{3cm}{Apache-2.0}} \\
\mbox{\truncate{5.5cm}{Qwen/Qwen3-30B-A3B-Instruct-2507-FP8}} & \mbox{\truncate{1.8cm}{Decoder Only}} & \mbox{\truncate{1.3cm}{31B}} & \mbox{\truncate{3cm}{Apache-2.0}} \\
\mbox{\truncate{5.5cm}{Qwen/Qwen3-4B}} & \mbox{\truncate{1.8cm}{Decoder Only}} & \mbox{\truncate{1.3cm}{4B}} & \mbox{\truncate{3cm}{Apache-2.0}} \\
\mbox{\truncate{5.5cm}{Qwen/Qwen3-4B-Instruct-2507}} & \mbox{\truncate{1.8cm}{Decoder Only}} & \mbox{\truncate{1.3cm}{4B}} & \mbox{\truncate{3cm}{Apache-2.0}} \\
\mbox{\truncate{5.5cm}{Qwen/Qwen3-4B-Thinking-2507}} & \mbox{\truncate{1.8cm}{Decoder Only}} & \mbox{\truncate{1.3cm}{4B}} & \mbox{\truncate{3cm}{Apache-2.0}} \\
\mbox{\truncate{5.5cm}{Qwen/Qwen3-4B-Thinking-2507-FP8}} & \mbox{\truncate{1.8cm}{Decoder Only}} & \mbox{\truncate{1.3cm}{4B}} & \mbox{\truncate{3cm}{Apache-2.0}} \\
\mbox{\truncate{5.5cm}{Qwen/Qwen3-8B}} & \mbox{\truncate{1.8cm}{Decoder Only}} & \mbox{\truncate{1.3cm}{8B}} & \mbox{\truncate{3cm}{Apache-2.0}} \\
\mbox{\truncate{5.5cm}{S4nfs/Neeto-1.0-8b}} & \mbox{\truncate{1.8cm}{Decoder Only}} & \mbox{\truncate{1.3cm}{8B}} & \mbox{\truncate{3cm}{CC-BY-NC-4.0}} \\
\mbox{\truncate{5.5cm}{SUSTech/SUS-Chat-34B}} & \mbox{\truncate{1.8cm}{Decoder Only}} & \mbox{\truncate{1.3cm}{34B}} & \mbox{\truncate{3cm}{Apache-2.0}} \\
\mbox{\truncate{5.5cm}{SciPhi/SciPhi-Mistral-7B-32k}} & \mbox{\truncate{1.8cm}{Decoder Only}} & \mbox{\truncate{1.3cm}{7B}} & \mbox{\truncate{3cm}{MIT}} \\
\mbox{\truncate{5.5cm}{SciPhi/SciPhi-Self-RAG-Mistral-7B-32k}} & \mbox{\truncate{1.8cm}{Decoder Only}} & \mbox{\truncate{1.3cm}{7B}} & \mbox{\truncate{3cm}{MIT}} \\
\mbox{\truncate{5.5cm}{Tesslate/UIGEN-X-4B-0729}} & \mbox{\truncate{1.8cm}{Decoder Only}} & \mbox{\truncate{1.3cm}{4B}} & \mbox{\truncate{3cm}{Apache-2.0}} \\
\mbox{\truncate{5.5cm}{Tesslate/WEBGEN-4B-Preview}} & \mbox{\truncate{1.8cm}{Decoder Only}} & \mbox{\truncate{1.3cm}{4B}} & \mbox{\truncate{3cm}{Apache-2.0}} \\
\mbox{\truncate{5.5cm}{ThatSkyFox/DialoGPT-medium-whatsapp}} & \mbox{\truncate{1.8cm}{Encoder Only}} & \mbox{\truncate{1.3cm}{Unknown}} & \mbox{\truncate{3cm}{Unknown}} \\
\mbox{\truncate{5.5cm}{TheBloke/CodeLlama-70B-Instruct-AWQ}} & \mbox{\truncate{1.8cm}{Decoder Only}} & \mbox{\truncate{1.3cm}{69B}} & \mbox{\truncate{3cm}{Llama-2}} \\
\mbox{\truncate{5.5cm}{TheBloke/koala-13B-HF}} & \mbox{\truncate{1.8cm}{Decoder Only}} & \mbox{\truncate{1.3cm}{13B}} & \mbox{\truncate{3cm}{Other}} \\
\mbox{\truncate{5.5cm}{TheBloke/tulu-30B-fp16}} & \mbox{\truncate{1.8cm}{Decoder Only}} & \mbox{\truncate{1.3cm}{30B}} & \mbox{\truncate{3cm}{Other}} \\
\mbox{\truncate{5.5cm}{TheMindExpansionNetwork/SYNERGETIC COG...\_V1\_Hermes-4-14B\_v1}} & \mbox{\truncate{1.8cm}{Decoder Only}} & \mbox{\truncate{1.3cm}{14B}} & \mbox{\truncate{3cm}{Unknown}} \\
\mbox{\truncate{5.5cm}{TheOneWhoWill/Bootstrap-LLM}} & \mbox{\truncate{1.8cm}{Decoder Only}} & \mbox{\truncate{1.3cm}{0.1B}} & \mbox{\truncate{3cm}{Apache-2.0}} \\
\mbox{\truncate{5.5cm}{Tianlin668/MentalBART}} & \mbox{\truncate{1.8cm}{Encoder Decoder}} & \mbox{\truncate{1.3cm}{Unknown}} & \mbox{\truncate{3cm}{MIT}} \\
\mbox{\truncate{5.5cm}{TigerResearch/tigerbot-13b-base}} & \mbox{\truncate{1.8cm}{Decoder Only}} & \mbox{\truncate{1.3cm}{13B}} & \mbox{\truncate{3cm}{Apache-2.0}} \\
\mbox{\truncate{5.5cm}{TinyLlama/TinyLlama-1.1B-Chat-v0.5}} & \mbox{\truncate{1.8cm}{Decoder Only}} & \mbox{\truncate{1.3cm}{1B}} & \mbox{\truncate{3cm}{Apache-2.0}} \\
\mbox{\truncate{5.5cm}{TinyLlama/TinyLlama-1.1B-Chat-v1.0}} & \mbox{\truncate{1.8cm}{Decoder Only}} & \mbox{\truncate{1.3cm}{1B}} & \mbox{\truncate{3cm}{Apache-2.0}} \\
\mbox{\truncate{5.5cm}{TinyLlama/TinyLlama-1.1B-intermediate-step-1431k-3T}} & \mbox{\truncate{1.8cm}{Decoder Only}} & \mbox{\truncate{1.3cm}{1B}} & \mbox{\truncate{3cm}{Apache-2.0}} \\
\mbox{\truncate{5.5cm}{TinyLlama/TinyLlama\_v1.1}} & \mbox{\truncate{1.8cm}{Decoder Only}} & \mbox{\truncate{1.3cm}{1B}} & \mbox{\truncate{3cm}{Apache-2.0}} \\
\mbox{\truncate{5.5cm}{TinyLlama/TinyLlama\_v1.1\_math\_code}} & \mbox{\truncate{1.8cm}{Decoder Only}} & \mbox{\truncate{1.3cm}{1B}} & \mbox{\truncate{3cm}{Apache-2.0}} \\
\mbox{\truncate{5.5cm}{Vikhrmodels/Vikhr-Nemo-12B-Instruct-R-21-09-24}} & \mbox{\truncate{1.8cm}{Decoder Only}} & \mbox{\truncate{1.3cm}{12B}} & \mbox{\truncate{3cm}{Apache-2.0}} \\
\mbox{\truncate{5.5cm}{Vision-CAIR/vicuna}} & \mbox{\truncate{1.8cm}{Decoder Only}} & \mbox{\truncate{1.3cm}{Unknown}} & \mbox{\truncate{3cm}{Unknown}} \\
\mbox{\truncate{5.5cm}{Vortex5/Lunar-Nexus-12B}} & \mbox{\truncate{1.8cm}{Decoder Only}} & \mbox{\truncate{1.3cm}{12B}} & \mbox{\truncate{3cm}{Unknown}} \\
\mbox{\truncate{5.5cm}{Vortex5/Moonlit-Shadow-12B}} & \mbox{\truncate{1.8cm}{Decoder Only}} & \mbox{\truncate{1.3cm}{12B}} & \mbox{\truncate{3cm}{Unknown}} \\
\mbox{\truncate{5.5cm}{WhiteRabbitNeo/WhiteRabbitNeo-13B-v1}} & \mbox{\truncate{1.8cm}{Decoder Only}} & \mbox{\truncate{1.3cm}{13B}} & \mbox{\truncate{3cm}{Llama-2}} \\
\mbox{\truncate{5.5cm}{WizardLM/WizardLM-70B-V1.0}} & \mbox{\truncate{1.8cm}{Decoder Only}} & \mbox{\truncate{1.3cm}{70B}} & \mbox{\truncate{3cm}{Llama-2}} \\
\mbox{\truncate{5.5cm}{abocide/Qwen2.5-7B-Instruct-R1-forfinance}} & \mbox{\truncate{1.8cm}{Decoder Only}} & \mbox{\truncate{1.3cm}{8B}} & \mbox{\truncate{3cm}{Apache-2.0}} \\
\mbox{\truncate{5.5cm}{adamo1139/Apertus-8B-Instruct-2509-ungated}} & \mbox{\truncate{1.8cm}{Decoder Only}} & \mbox{\truncate{1.3cm}{8B}} & \mbox{\truncate{3cm}{Apache-2.0}} \\
\mbox{\truncate{5.5cm}{agentica-org/DeepCoder-14B-Preview}} & \mbox{\truncate{1.8cm}{Decoder Only}} & \mbox{\truncate{1.3cm}{15B}} & \mbox{\truncate{3cm}{MIT}} \\
\mbox{\truncate{5.5cm}{allenai/OLMoE-1B-7B-0924}} & \mbox{\truncate{1.8cm}{Decoder Only}} & \mbox{\truncate{1.3cm}{7B}} & \mbox{\truncate{3cm}{Apache-2.0}} \\
\mbox{\truncate{5.5cm}{allenai/tulu-2-dpo-70b}} & \mbox{\truncate{1.8cm}{Decoder Only}} & \mbox{\truncate{1.3cm}{69B}} & \mbox{\truncate{3cm}{Other}} \\
\mbox{\truncate{5.5cm}{aquif-ai/aquif-3.5-8B-Think}} & \mbox{\truncate{1.8cm}{Decoder Only}} & \mbox{\truncate{1.3cm}{8B}} & \mbox{\truncate{3cm}{Apache-2.0}} \\
\mbox{\truncate{5.5cm}{askfjhaskjgh/UbermenschetienASI}} & \mbox{\truncate{1.8cm}{Decoder Only}} & \mbox{\truncate{1.3cm}{8B}} & \mbox{\truncate{3cm}{Other}} \\
\mbox{\truncate{5.5cm}{augmxnt/shisa-base-7b-v1}} & \mbox{\truncate{1.8cm}{Decoder Only}} & \mbox{\truncate{1.3cm}{8B}} & \mbox{\truncate{3cm}{Apache-2.0}} \\
\mbox{\truncate{5.5cm}{bardsai/jaskier-7b-dpo-v5.6}} & \mbox{\truncate{1.8cm}{Decoder Only}} & \mbox{\truncate{1.3cm}{7B}} & \mbox{\truncate{3cm}{CC-BY-4.0}} \\
\mbox{\truncate{5.5cm}{berkeley-nest/Starling-LM-7B-alpha}} & \mbox{\truncate{1.8cm}{Decoder Only}} & \mbox{\truncate{1.3cm}{7B}} & \mbox{\truncate{3cm}{Apache-2.0}} \\
\mbox{\truncate{5.5cm}{beruniy/Llama-3.2-3B-Instruct-Uz}} & \mbox{\truncate{1.8cm}{Decoder Only}} & \mbox{\truncate{1.3cm}{3B}} & \mbox{\truncate{3cm}{llama3.2}} \\
\mbox{\truncate{5.5cm}{beyoru/Luna}} & \mbox{\truncate{1.8cm}{Decoder Only}} & \mbox{\truncate{1.3cm}{4B}} & \mbox{\truncate{3cm}{MIT}} \\
\mbox{\truncate{5.5cm}{bigcode/octocoder}} & \mbox{\truncate{1.8cm}{Decoder Only}} & \mbox{\truncate{1.3cm}{16B}} & \mbox{\truncate{3cm}{OpenRAIL-M}} \\
\mbox{\truncate{5.5cm}{bigcode/starcoder2-3b}} & \mbox{\truncate{1.8cm}{Decoder Only}} & \mbox{\truncate{1.3cm}{3B}} & \mbox{\truncate{3cm}{OpenRAIL-M}} \\
\mbox{\truncate{5.5cm}{bigscience/bloom-560m}} & \mbox{\truncate{1.8cm}{Decoder Only}} & \mbox{\truncate{1.3cm}{0.6B}} & \mbox{\truncate{3cm}{BigScience RAIL 1.0}} \\
\mbox{\truncate{5.5cm}{bigscience/bloom-7b1}} & \mbox{\truncate{1.8cm}{Decoder Only}} & \mbox{\truncate{1.3cm}{7B}} & \mbox{\truncate{3cm}{BigScience RAIL 1.0}} \\
\mbox{\truncate{5.5cm}{bigscience/bloomz-1b1}} & \mbox{\truncate{1.8cm}{Decoder Only}} & \mbox{\truncate{1.3cm}{1B}} & \mbox{\truncate{3cm}{BigScience RAIL 1.0}} \\
\mbox{\truncate{5.5cm}{bigscience/bloomz-1b7}} & \mbox{\truncate{1.8cm}{Decoder Only}} & \mbox{\truncate{1.3cm}{2B}} & \mbox{\truncate{3cm}{BigScience RAIL 1.0}} \\
\mbox{\truncate{5.5cm}{bigscience/bloomz-560m}} & \mbox{\truncate{1.8cm}{Decoder Only}} & \mbox{\truncate{1.3cm}{0.6B}} & \mbox{\truncate{3cm}{BigScience RAIL 1.0}} \\
\mbox{\truncate{5.5cm}{bigscience/bloomz-7b1}} & \mbox{\truncate{1.8cm}{Decoder Only}} & \mbox{\truncate{1.3cm}{7B}} & \mbox{\truncate{3cm}{BigScience RAIL 1.0}} \\
\mbox{\truncate{5.5cm}{bxod/Llama-3.2-1B-Instruct-uz}} & \mbox{\truncate{1.8cm}{Decoder Only}} & \mbox{\truncate{1.3cm}{1B}} & \mbox{\truncate{3cm}{llama3.2}} \\
\mbox{\truncate{5.5cm}{chatitcloud/UZI1}} & \mbox{\truncate{1.8cm}{Decoder Only}} & \mbox{\truncate{1.3cm}{0.3B}} & \mbox{\truncate{3cm}{Apache-2.0}} \\
\mbox{\truncate{5.5cm}{cisco-ai/mini-bart-g2p}} & \mbox{\truncate{1.8cm}{Encoder Decoder}} & \mbox{\truncate{1.3cm}{Unknown}} & \mbox{\truncate{3cm}{Apache-2.0}} \\
\mbox{\truncate{5.5cm}{cloudyu/Mixtral\_11Bx2\_MoE\_19B}} & \mbox{\truncate{1.8cm}{Decoder Only}} & \mbox{\truncate{1.3cm}{19B}} & \mbox{\truncate{3cm}{CC-BY-NC-4.0}} \\
\mbox{\truncate{5.5cm}{codefuse-ai/CodeFuse-DeepSeek-33B}} & \mbox{\truncate{1.8cm}{Decoder Only}} & \mbox{\truncate{1.3cm}{33B}} & \mbox{\truncate{3cm}{Other}} \\
\mbox{\truncate{5.5cm}{codellama/CodeLlama-13b-Instruct-hf}} & \mbox{\truncate{1.8cm}{Decoder Only}} & \mbox{\truncate{1.3cm}{13B}} & \mbox{\truncate{3cm}{Llama-2}} \\
\bottomrule
\end{tabular}
\normalsize
\end{table}
\newpage
\begin{table}[h]
\centering
\caption*{Table ~\ref{tab:models-1ist}: Full list of used \nummodels{} models (continued)}
\scriptsize
\begin{tabular}{>{\raggedright\arraybackslash}p{5.5cm} >{\raggedright\arraybackslash}p{1.8cm} >{\raggedright\arraybackslash}p{1.3cm} >{\raggedright\arraybackslash}p{3cm}}
\toprule
\textbf{Model} & \textbf{Architecture} & \textbf{Parameters} & \textbf{License} \\
\midrule
\mbox{\truncate{5.5cm}{codellama/CodeLlama-34b-Instruct-hf}} & \mbox{\truncate{1.8cm}{Decoder Only}} & \mbox{\truncate{1.3cm}{34B}} & \mbox{\truncate{3cm}{Llama-2}} \\
\mbox{\truncate{5.5cm}{codellama/CodeLlama-7b-hf}} & \mbox{\truncate{1.8cm}{Decoder Only}} & \mbox{\truncate{1.3cm}{7B}} & \mbox{\truncate{3cm}{Llama-2}} \\
\mbox{\truncate{5.5cm}{codeparrot/codeparrot-small-code-to-text}} & \mbox{\truncate{1.8cm}{Encoder Only}} & \mbox{\truncate{1.3cm}{Unknown}} & \mbox{\truncate{3cm}{Apache-2.0}} \\
\mbox{\truncate{5.5cm}{codeparrot/codeparrot-small-multi}} & \mbox{\truncate{1.8cm}{Encoder Only}} & \mbox{\truncate{1.3cm}{Unknown}} & \mbox{\truncate{3cm}{Apache-2.0}} \\
\mbox{\truncate{5.5cm}{continuedev/instinct}} & \mbox{\truncate{1.8cm}{Decoder Only}} & \mbox{\truncate{1.3cm}{8B}} & \mbox{\truncate{3cm}{Apache-2.0}} \\
\mbox{\truncate{5.5cm}{darkc0de/XortronCriminalComputingConfig}} & \mbox{\truncate{1.8cm}{Decoder Only}} & \mbox{\truncate{1.3cm}{24B}} & \mbox{\truncate{3cm}{Apache-2.0}} \\
\mbox{\truncate{5.5cm}{databricks/dolly-v1-6b}} & \mbox{\truncate{1.8cm}{Decoder Only}} & \mbox{\truncate{1.3cm}{6B}} & \mbox{\truncate{3cm}{CC-BY-NC-4.0}} \\
\mbox{\truncate{5.5cm}{databricks/dolly-v2-12b}} & \mbox{\truncate{1.8cm}{Decoder Only}} & \mbox{\truncate{1.3cm}{12B}} & \mbox{\truncate{3cm}{MIT}} \\
\mbox{\truncate{5.5cm}{databricks/dolly-v2-3b}} & \mbox{\truncate{1.8cm}{Decoder Only}} & \mbox{\truncate{1.3cm}{3B}} & \mbox{\truncate{3cm}{MIT}} \\
\mbox{\truncate{5.5cm}{databricks/dolly-v2-7b}} & \mbox{\truncate{1.8cm}{Decoder Only}} & \mbox{\truncate{1.3cm}{7B}} & \mbox{\truncate{3cm}{MIT}} \\
\mbox{\truncate{5.5cm}{deepseek-ai/DeepSeek-Coder-V2-Lite-Instruct}} & \mbox{\truncate{1.8cm}{Decoder Only}} & \mbox{\truncate{1.3cm}{16B}} & \mbox{\truncate{3cm}{Other}} \\
\mbox{\truncate{5.5cm}{deepseek-ai/DeepSeek-R1-0528-Qwen3-8B}} & \mbox{\truncate{1.8cm}{Decoder Only}} & \mbox{\truncate{1.3cm}{8B}} & \mbox{\truncate{3cm}{MIT}} \\
\mbox{\truncate{5.5cm}{deepseek-ai/DeepSeek-R1-Distill-Llama-8B}} & \mbox{\truncate{1.8cm}{Decoder Only}} & \mbox{\truncate{1.3cm}{8B}} & \mbox{\truncate{3cm}{MIT}} \\
\mbox{\truncate{5.5cm}{deepseek-ai/DeepSeek-R1-Distill-Qwen-1.5B}} & \mbox{\truncate{1.8cm}{Decoder Only}} & \mbox{\truncate{1.3cm}{2B}} & \mbox{\truncate{3cm}{MIT}} \\
\mbox{\truncate{5.5cm}{deepseek-ai/DeepSeek-R1-Distill-Qwen-7B}} & \mbox{\truncate{1.8cm}{Decoder Only}} & \mbox{\truncate{1.3cm}{8B}} & \mbox{\truncate{3cm}{MIT}} \\
\mbox{\truncate{5.5cm}{deepseek-ai/DeepSeek-V2-Lite}} & \mbox{\truncate{1.8cm}{Decoder Only}} & \mbox{\truncate{1.3cm}{16B}} & \mbox{\truncate{3cm}{Other}} \\
\mbox{\truncate{5.5cm}{deepseek-ai/deepseek-coder-1.3b-base}} & \mbox{\truncate{1.8cm}{Decoder Only}} & \mbox{\truncate{1.3cm}{1B}} & \mbox{\truncate{3cm}{Other}} \\
\mbox{\truncate{5.5cm}{deepseek-ai/deepseek-coder-6.7b-instruct}} & \mbox{\truncate{1.8cm}{Decoder Only}} & \mbox{\truncate{1.3cm}{7B}} & \mbox{\truncate{3cm}{Other}} \\
\mbox{\truncate{5.5cm}{deepseek-ai/deepseek-llm-67b-chat}} & \mbox{\truncate{1.8cm}{Decoder Only}} & \mbox{\truncate{1.3cm}{67B}} & \mbox{\truncate{3cm}{Other}} \\
\mbox{\truncate{5.5cm}{deepseek-ai/deepseek-llm-7b-base}} & \mbox{\truncate{1.8cm}{Decoder Only}} & \mbox{\truncate{1.3cm}{7B}} & \mbox{\truncate{3cm}{Other}} \\
\mbox{\truncate{5.5cm}{deepseek-ai/deepseek-llm-7b-chat}} & \mbox{\truncate{1.8cm}{Decoder Only}} & \mbox{\truncate{1.3cm}{7B}} & \mbox{\truncate{3cm}{Other}} \\
\mbox{\truncate{5.5cm}{deepseek-ai/deepseek-math-7b-instruct}} & \mbox{\truncate{1.8cm}{Decoder Only}} & \mbox{\truncate{1.3cm}{7B}} & \mbox{\truncate{3cm}{Other}} \\
\mbox{\truncate{5.5cm}{dicta-il/dictalm2.0}} & \mbox{\truncate{1.8cm}{Decoder Only}} & \mbox{\truncate{1.3cm}{7B}} & \mbox{\truncate{3cm}{Apache-2.0}} \\
\mbox{\truncate{5.5cm}{dphn/Dolphin-Mistral-24B-Venice-Edition}} & \mbox{\truncate{1.8cm}{Decoder Only}} & \mbox{\truncate{1.3cm}{24B}} & \mbox{\truncate{3cm}{Apache-2.0}} \\
\mbox{\truncate{5.5cm}{eci-io/climategpt-7b}} & \mbox{\truncate{1.8cm}{Decoder Only}} & \mbox{\truncate{1.3cm}{7B}} & \mbox{\truncate{3cm}{Other}} \\
\mbox{\truncate{5.5cm}{elyza/ELYZA-japanese-Llama-2-7b}} & \mbox{\truncate{1.8cm}{Decoder Only}} & \mbox{\truncate{1.3cm}{7B}} & \mbox{\truncate{3cm}{Llama-2}} \\
\mbox{\truncate{5.5cm}{elyza/ELYZA-japanese-Llama-2-7b-instruct}} & \mbox{\truncate{1.8cm}{Decoder Only}} & \mbox{\truncate{1.3cm}{7B}} & \mbox{\truncate{3cm}{Llama-2}} \\
\mbox{\truncate{5.5cm}{elyza/Llama-3-ELYZA-JP-8B}} & \mbox{\truncate{1.8cm}{Decoder Only}} & \mbox{\truncate{1.3cm}{8B}} & \mbox{\truncate{3cm}{llama3}} \\
\mbox{\truncate{5.5cm}{eren23/ogno-monarch-jaskier-merge-7b-OH-PREF-DPO}} & \mbox{\truncate{1.8cm}{Decoder Only}} & \mbox{\truncate{1.3cm}{7B}} & \mbox{\truncate{3cm}{CC-BY-NC-4.0}} \\
\mbox{\truncate{5.5cm}{facebook/galactica-1.3b}} & \mbox{\truncate{1.8cm}{Decoder Only}} & \mbox{\truncate{1.3cm}{1B}} & \mbox{\truncate{3cm}{CC-BY-NC-4.0}} \\
\mbox{\truncate{5.5cm}{fblgit/UNA-SimpleSmaug-34b-v1beta}} & \mbox{\truncate{1.8cm}{Decoder Only}} & \mbox{\truncate{1.3cm}{34B}} & \mbox{\truncate{3cm}{Apache-2.0}} \\
\mbox{\truncate{5.5cm}{fluently/FluentlyQwen3-1.7B}} & \mbox{\truncate{1.8cm}{Decoder Only}} & \mbox{\truncate{1.3cm}{2B}} & \mbox{\truncate{3cm}{Apache-2.0}} \\
\mbox{\truncate{5.5cm}{futurehouse/ether0}} & \mbox{\truncate{1.8cm}{Decoder Only}} & \mbox{\truncate{1.3cm}{24B}} & \mbox{\truncate{3cm}{Apache-2.0}} \\
\mbox{\truncate{5.5cm}{golaxy/gowizardlm}} & \mbox{\truncate{1.8cm}{Decoder Only}} & \mbox{\truncate{1.3cm}{Unknown}} & \mbox{\truncate{3cm}{Apache-2.0}} \\
\mbox{\truncate{5.5cm}{google-t5/t5-large}} & \mbox{\truncate{1.8cm}{Encoder Decoder}} & \mbox{\truncate{1.3cm}{0.7B}} & \mbox{\truncate{3cm}{Apache-2.0}} \\
\mbox{\truncate{5.5cm}{google-t5/t5-small}} & \mbox{\truncate{1.8cm}{Encoder Decoder}} & \mbox{\truncate{1.3cm}{60.5M}} & \mbox{\truncate{3cm}{Apache-2.0}} \\
\mbox{\truncate{5.5cm}{google/flan-t5-base}} & \mbox{\truncate{1.8cm}{Encoder Decoder}} & \mbox{\truncate{1.3cm}{0.2B}} & \mbox{\truncate{3cm}{Apache-2.0}} \\
\mbox{\truncate{5.5cm}{google/flan-t5-large}} & \mbox{\truncate{1.8cm}{Encoder Decoder}} & \mbox{\truncate{1.3cm}{0.8B}} & \mbox{\truncate{3cm}{Apache-2.0}} \\
\mbox{\truncate{5.5cm}{google/gemma-2b-it}} & \mbox{\truncate{1.8cm}{Unknown}} & \mbox{\truncate{1.3cm}{3B}} & \mbox{\truncate{3cm}{Gemma}} \\
\mbox{\truncate{5.5cm}{google/gemma-3-12b-it}} & \mbox{\truncate{1.8cm}{Unknown}} & \mbox{\truncate{1.3cm}{12B}} & \mbox{\truncate{3cm}{Gemma}} \\
\mbox{\truncate{5.5cm}{google/gemma-3-1b-it}} & \mbox{\truncate{1.8cm}{Unknown}} & \mbox{\truncate{1.3cm}{1B}} & \mbox{\truncate{3cm}{Gemma}} \\
\mbox{\truncate{5.5cm}{google/gemma-3-4b-it}} & \mbox{\truncate{1.8cm}{Unknown}} & \mbox{\truncate{1.3cm}{4B}} & \mbox{\truncate{3cm}{Gemma}} \\
\mbox{\truncate{5.5cm}{google/gemma-7b-it}} & \mbox{\truncate{1.8cm}{Unknown}} & \mbox{\truncate{1.3cm}{9B}} & \mbox{\truncate{3cm}{Gemma}} \\
\mbox{\truncate{5.5cm}{gradientai/Llama-3-8B-Instruct-262k}} & \mbox{\truncate{1.8cm}{Decoder Only}} & \mbox{\truncate{1.3cm}{8B}} & \mbox{\truncate{3cm}{llama3}} \\
\mbox{\truncate{5.5cm}{heegyu/kogpt-j-base}} & \mbox{\truncate{1.8cm}{Decoder Only}} & \mbox{\truncate{1.3cm}{Unknown}} & \mbox{\truncate{3cm}{MIT}} \\
\mbox{\truncate{5.5cm}{huggyllama/llama-7b}} & \mbox{\truncate{1.8cm}{Decoder Only}} & \mbox{\truncate{1.3cm}{7B}} & \mbox{\truncate{3cm}{Other}} \\
\mbox{\truncate{5.5cm}{huihui-ai/Huihui-Jan-v1-4B-abliterated}} & \mbox{\truncate{1.8cm}{Decoder Only}} & \mbox{\truncate{1.3cm}{4B}} & \mbox{\truncate{3cm}{Apache-2.0}} \\
\mbox{\truncate{5.5cm}{ibivibiv/alpaca-dragon-72b-v1}} & \mbox{\truncate{1.8cm}{Decoder Only}} & \mbox{\truncate{1.3cm}{72B}} & \mbox{\truncate{3cm}{Other}} \\
\mbox{\truncate{5.5cm}{inflatebot/MN-12B-Mag-Mell-R1}} & \mbox{\truncate{1.8cm}{Decoder Only}} & \mbox{\truncate{1.3cm}{12B}} & \mbox{\truncate{3cm}{Unknown}} \\
\mbox{\truncate{5.5cm}{janhq/Jan-v1-4B}} & \mbox{\truncate{1.8cm}{Decoder Only}} & \mbox{\truncate{1.3cm}{4B}} & \mbox{\truncate{3cm}{Apache-2.0}} \\
\mbox{\truncate{5.5cm}{janhq/Jan-v1-edge}} & \mbox{\truncate{1.8cm}{Decoder Only}} & \mbox{\truncate{1.3cm}{2B}} & \mbox{\truncate{3cm}{Apache-2.0}} \\
\mbox{\truncate{5.5cm}{jiawei-ucas/Qwen-2.5-7B-ConsistentChat}} & \mbox{\truncate{1.8cm}{Decoder Only}} & \mbox{\truncate{1.3cm}{8B}} & \mbox{\truncate{3cm}{MIT}} \\
\mbox{\truncate{5.5cm}{jxm/gpt-oss-20b-base}} & \mbox{\truncate{1.8cm}{Unknown}} & \mbox{\truncate{1.3cm}{20B}} & \mbox{\truncate{3cm}{Unknown}} \\
\mbox{\truncate{5.5cm}{kevin009/llamaRAGdrama}} & \mbox{\truncate{1.8cm}{Decoder Only}} & \mbox{\truncate{1.3cm}{7B}} & \mbox{\truncate{3cm}{Apache-2.0}} \\
\mbox{\truncate{5.5cm}{knifeayumu/Cydonia-v4.1-MS3.2-Magnum-Diamond-24B}} & \mbox{\truncate{1.8cm}{Decoder Only}} & \mbox{\truncate{1.3cm}{24B}} & \mbox{\truncate{3cm}{Apache-2.0}} \\
\mbox{\truncate{5.5cm}{kyujinpy/Sakura-SOLRCA-Math-Instruct-DPO-v1}} & \mbox{\truncate{1.8cm}{Decoder Only}} & \mbox{\truncate{1.3cm}{11B}} & \mbox{\truncate{3cm}{CC-BY-NC-SA-4.0}} \\
\mbox{\truncate{5.5cm}{lgaalves/gpt1}} & \mbox{\truncate{1.8cm}{Decoder Only}} & \mbox{\truncate{1.3cm}{0.1B}} & \mbox{\truncate{3cm}{MIT}} \\
\mbox{\truncate{5.5cm}{liyuesen/druggpt}} & \mbox{\truncate{1.8cm}{Encoder Only}} & \mbox{\truncate{1.3cm}{Unknown}} & \mbox{\truncate{3cm}{GPL-3.0}} \\
\mbox{\truncate{5.5cm}{lmsys/vicuna-13b-v1.5}} & \mbox{\truncate{1.8cm}{Decoder Only}} & \mbox{\truncate{1.3cm}{13B}} & \mbox{\truncate{3cm}{Llama-2}} \\
\mbox{\truncate{5.5cm}{lmsys/vicuna-33b-v1.3}} & \mbox{\truncate{1.8cm}{Decoder Only}} & \mbox{\truncate{1.3cm}{33B}} & \mbox{\truncate{3cm}{Unknown}} \\
\mbox{\truncate{5.5cm}{lmsys/vicuna-7b-v1.1}} & \mbox{\truncate{1.8cm}{Decoder Only}} & \mbox{\truncate{1.3cm}{7B}} & \mbox{\truncate{3cm}{Unknown}} \\
\mbox{\truncate{5.5cm}{lmsys/vicuna-7b-v1.5}} & \mbox{\truncate{1.8cm}{Decoder Only}} & \mbox{\truncate{1.3cm}{7B}} & \mbox{\truncate{3cm}{Llama-2}} \\
\mbox{\truncate{5.5cm}{lmsys/vicuna-7b-v1.5-16k}} & \mbox{\truncate{1.8cm}{Decoder Only}} & \mbox{\truncate{1.3cm}{7B}} & \mbox{\truncate{3cm}{Llama-2}} \\
\mbox{\truncate{5.5cm}{m-a-p/ChatMusician}} & \mbox{\truncate{1.8cm}{Decoder Only}} & \mbox{\truncate{1.3cm}{7B}} & \mbox{\truncate{3cm}{MIT}} \\
\mbox{\truncate{5.5cm}{meta-llama/CodeLlama-13b-Instruct-hf}} & \mbox{\truncate{1.8cm}{Unknown}} & \mbox{\truncate{1.3cm}{13B}} & \mbox{\truncate{3cm}{Llama-2}} \\
\mbox{\truncate{5.5cm}{meta-llama/CodeLlama-7b-Instruct-hf}} & \mbox{\truncate{1.8cm}{Unknown}} & \mbox{\truncate{1.3cm}{7B}} & \mbox{\truncate{3cm}{Llama-2}} \\
\mbox{\truncate{5.5cm}{meta-llama/Llama-2-13b-chat-hf}} & \mbox{\truncate{1.8cm}{Unknown}} & \mbox{\truncate{1.3cm}{13B}} & \mbox{\truncate{3cm}{Llama-2}} \\
\mbox{\truncate{5.5cm}{meta-llama/Llama-2-70b-chat-hf}} & \mbox{\truncate{1.8cm}{Unknown}} & \mbox{\truncate{1.3cm}{69B}} & \mbox{\truncate{3cm}{Llama-2}} \\
\mbox{\truncate{5.5cm}{meta-llama/Llama-2-7b-chat-hf}} & \mbox{\truncate{1.8cm}{Unknown}} & \mbox{\truncate{1.3cm}{7B}} & \mbox{\truncate{3cm}{Llama-2}} \\
\mbox{\truncate{5.5cm}{meta-llama/Llama-2-7b-hf}} & \mbox{\truncate{1.8cm}{Unknown}} & \mbox{\truncate{1.3cm}{7B}} & \mbox{\truncate{3cm}{Llama-2}} \\
\mbox{\truncate{5.5cm}{meta-llama/Llama-3.1-8B-Instruct}} & \mbox{\truncate{1.8cm}{Unknown}} & \mbox{\truncate{1.3cm}{8B}} & \mbox{\truncate{3cm}{llama3.1}} \\
\mbox{\truncate{5.5cm}{meta-llama/Llama-3.2-1B-Instruct}} & \mbox{\truncate{1.8cm}{Unknown}} & \mbox{\truncate{1.3cm}{1B}} & \mbox{\truncate{3cm}{llama3.2}} \\
\mbox{\truncate{5.5cm}{meta-llama/Llama-3.2-3B-Instruct}} & \mbox{\truncate{1.8cm}{Unknown}} & \mbox{\truncate{1.3cm}{3B}} & \mbox{\truncate{3cm}{llama3.2}} \\
\mbox{\truncate{5.5cm}{meta-llama/LlamaGuard-7b}} & \mbox{\truncate{1.8cm}{Unknown}} & \mbox{\truncate{1.3cm}{7B}} & \mbox{\truncate{3cm}{Llama-2}} \\
\mbox{\truncate{5.5cm}{meta-llama/Meta-Llama-3-70B}} & \mbox{\truncate{1.8cm}{Unknown}} & \mbox{\truncate{1.3cm}{71B}} & \mbox{\truncate{3cm}{llama3}} \\
\mbox{\truncate{5.5cm}{meta-llama/Meta-Llama-3-70B-Instruct}} & \mbox{\truncate{1.8cm}{Unknown}} & \mbox{\truncate{1.3cm}{71B}} & \mbox{\truncate{3cm}{llama3}} \\
\mbox{\truncate{5.5cm}{meta-llama/Meta-Llama-3-8B}} & \mbox{\truncate{1.8cm}{Unknown}} & \mbox{\truncate{1.3cm}{8B}} & \mbox{\truncate{3cm}{llama3}} \\
\mbox{\truncate{5.5cm}{meta-llama/Meta-Llama-3-8B-Instruct}} & \mbox{\truncate{1.8cm}{Unknown}} & \mbox{\truncate{1.3cm}{8B}} & \mbox{\truncate{3cm}{llama3}} \\
\bottomrule
\end{tabular}
\normalsize
\end{table}
\newpage
\begin{table}[h]
\centering
\caption*{Table ~\ref{tab:models-1ist}: Full list of used \nummodels{} models (continued)}
\scriptsize
\begin{tabular}{>{\raggedright\arraybackslash}p{5.5cm} >{\raggedright\arraybackslash}p{1.8cm} >{\raggedright\arraybackslash}p{1.3cm} >{\raggedright\arraybackslash}p{3cm}}
\toprule
\textbf{Model} & \textbf{Architecture} & \textbf{Parameters} & \textbf{License} \\
\midrule
\mbox{\truncate{5.5cm}{meta-llama/Meta-Llama-Guard-2-8B}} & \mbox{\truncate{1.8cm}{Unknown}} & \mbox{\truncate{1.3cm}{8B}} & \mbox{\truncate{3cm}{llama3}} \\
\mbox{\truncate{5.5cm}{meta-math/MetaMath-Llemma-7B}} & \mbox{\truncate{1.8cm}{Decoder Only}} & \mbox{\truncate{1.3cm}{7B}} & \mbox{\truncate{3cm}{Apache-2.0}} \\
\mbox{\truncate{5.5cm}{meta-math/MetaMath-Mistral-7B}} & \mbox{\truncate{1.8cm}{Decoder Only}} & \mbox{\truncate{1.3cm}{7B}} & \mbox{\truncate{3cm}{Apache-2.0}} \\
\mbox{\truncate{5.5cm}{microsoft/DialoGPT-medium}} & \mbox{\truncate{1.8cm}{Encoder Only}} & \mbox{\truncate{1.3cm}{Unknown}} & \mbox{\truncate{3cm}{MIT}} \\
\mbox{\truncate{5.5cm}{microsoft/MediPhi-Instruct}} & \mbox{\truncate{1.8cm}{Decoder Only}} & \mbox{\truncate{1.3cm}{4B}} & \mbox{\truncate{3cm}{MIT}} \\
\mbox{\truncate{5.5cm}{microsoft/Orca-2-13b}} & \mbox{\truncate{1.8cm}{Decoder Only}} & \mbox{\truncate{1.3cm}{13B}} & \mbox{\truncate{3cm}{Other}} \\
\mbox{\truncate{5.5cm}{microsoft/Orca-2-7b}} & \mbox{\truncate{1.8cm}{Decoder Only}} & \mbox{\truncate{1.3cm}{7B}} & \mbox{\truncate{3cm}{Other}} \\
\mbox{\truncate{5.5cm}{microsoft/Phi-3-mini-128k-instruct}} & \mbox{\truncate{1.8cm}{Decoder Only}} & \mbox{\truncate{1.3cm}{4B}} & \mbox{\truncate{3cm}{MIT}} \\
\mbox{\truncate{5.5cm}{microsoft/Phi-3-mini-4k-instruct}} & \mbox{\truncate{1.8cm}{Decoder Only}} & \mbox{\truncate{1.3cm}{4B}} & \mbox{\truncate{3cm}{MIT}} \\
\mbox{\truncate{5.5cm}{microsoft/Phi-3.5-mini-instruct}} & \mbox{\truncate{1.8cm}{Decoder Only}} & \mbox{\truncate{1.3cm}{4B}} & \mbox{\truncate{3cm}{MIT}} \\
\mbox{\truncate{5.5cm}{microsoft/Phi-4-mini-instruct}} & \mbox{\truncate{1.8cm}{Decoder Only}} & \mbox{\truncate{1.3cm}{4B}} & \mbox{\truncate{3cm}{MIT}} \\
\mbox{\truncate{5.5cm}{microsoft/Phi-4-mini-reasoning}} & \mbox{\truncate{1.8cm}{Decoder Only}} & \mbox{\truncate{1.3cm}{4B}} & \mbox{\truncate{3cm}{MIT}} \\
\mbox{\truncate{5.5cm}{microsoft/bitnet-b1.58-2B-4T}} & \mbox{\truncate{1.8cm}{Decoder Only}} & \mbox{\truncate{1.3cm}{0.8B}} & \mbox{\truncate{3cm}{MIT}} \\
\mbox{\truncate{5.5cm}{microsoft/phi-1}} & \mbox{\truncate{1.8cm}{Decoder Only}} & \mbox{\truncate{1.3cm}{1B}} & \mbox{\truncate{3cm}{MIT}} \\
\mbox{\truncate{5.5cm}{microsoft/phi-1\_5}} & \mbox{\truncate{1.8cm}{Decoder Only}} & \mbox{\truncate{1.3cm}{1B}} & \mbox{\truncate{3cm}{MIT}} \\
\mbox{\truncate{5.5cm}{microsoft/phi-2}} & \mbox{\truncate{1.8cm}{Decoder Only}} & \mbox{\truncate{1.3cm}{3B}} & \mbox{\truncate{3cm}{MIT}} \\
\mbox{\truncate{5.5cm}{microsoft/phi-4}} & \mbox{\truncate{1.8cm}{Decoder Only}} & \mbox{\truncate{1.3cm}{15B}} & \mbox{\truncate{3cm}{MIT}} \\
\mbox{\truncate{5.5cm}{miromind-ai/MiroThinker-14B-DPO-v0.2}} & \mbox{\truncate{1.8cm}{Decoder Only}} & \mbox{\truncate{1.3cm}{14B}} & \mbox{\truncate{3cm}{Apache-2.0}} \\
\mbox{\truncate{5.5cm}{mistralai/Ministral-8B-Instruct-2410}} & \mbox{\truncate{1.8cm}{Unknown}} & \mbox{\truncate{1.3cm}{8B}} & \mbox{\truncate{3cm}{Other}} \\
\mbox{\truncate{5.5cm}{mistralai/Mistral-7B-Instruct-v0.1}} & \mbox{\truncate{1.8cm}{Unknown}} & \mbox{\truncate{1.3cm}{7B}} & \mbox{\truncate{3cm}{Apache-2.0}} \\
\mbox{\truncate{5.5cm}{mistralai/Mistral-7B-Instruct-v0.3}} & \mbox{\truncate{1.8cm}{Unknown}} & \mbox{\truncate{1.3cm}{7B}} & \mbox{\truncate{3cm}{Apache-2.0}} \\
\mbox{\truncate{5.5cm}{mistralai/Mixtral-8x7B-Instruct-v0.1}} & \mbox{\truncate{1.8cm}{Unknown}} & \mbox{\truncate{1.3cm}{47B}} & \mbox{\truncate{3cm}{Apache-2.0}} \\
\mbox{\truncate{5.5cm}{mlabonne/AlphaMonarch-7B}} & \mbox{\truncate{1.8cm}{Decoder Only}} & \mbox{\truncate{1.3cm}{7B}} & \mbox{\truncate{3cm}{CC-BY-NC-4.0}} \\
\mbox{\truncate{5.5cm}{mlx-community/Apertus-8B-Instruct-2509-bf16}} & \mbox{\truncate{1.8cm}{Decoder Only}} & \mbox{\truncate{1.3cm}{8B}} & \mbox{\truncate{3cm}{Apache-2.0}} \\
\mbox{\truncate{5.5cm}{mookiezi/Discord-Micae-8B-Preview}} & \mbox{\truncate{1.8cm}{Decoder Only}} & \mbox{\truncate{1.3cm}{8B}} & \mbox{\truncate{3cm}{llama3}} \\
\mbox{\truncate{5.5cm}{mookiezi/Discord-Micae-Hermes-3-3B}} & \mbox{\truncate{1.8cm}{Decoder Only}} & \mbox{\truncate{1.3cm}{3B}} & \mbox{\truncate{3cm}{llama3}} \\
\mbox{\truncate{5.5cm}{mosaicml/mpt-30b-chat}} & \mbox{\truncate{1.8cm}{Decoder Only}} & \mbox{\truncate{1.3cm}{30B}} & \mbox{\truncate{3cm}{CC-BY-NC-SA-4.0}} \\
\mbox{\truncate{5.5cm}{mosaicml/mpt-30b-instruct}} & \mbox{\truncate{1.8cm}{Decoder Only}} & \mbox{\truncate{1.3cm}{30B}} & \mbox{\truncate{3cm}{Apache-2.0}} \\
\mbox{\truncate{5.5cm}{mosaicml/mpt-7b-storywriter}} & \mbox{\truncate{1.8cm}{Decoder Only}} & \mbox{\truncate{1.3cm}{7B}} & \mbox{\truncate{3cm}{Apache-2.0}} \\
\mbox{\truncate{5.5cm}{nakodanei/Blue-Orchid-2x7b}} & \mbox{\truncate{1.8cm}{Decoder Only}} & \mbox{\truncate{1.3cm}{13B}} & \mbox{\truncate{3cm}{Apache-2.0}} \\
\mbox{\truncate{5.5cm}{nomic-ai/gpt4all-13b-snoozy}} & \mbox{\truncate{1.8cm}{Decoder Only}} & \mbox{\truncate{1.3cm}{13B}} & \mbox{\truncate{3cm}{gpl}} \\
\mbox{\truncate{5.5cm}{nvidia/OpenReasoning-Nemotron-1.5B}} & \mbox{\truncate{1.8cm}{Decoder Only}} & \mbox{\truncate{1.3cm}{2B}} & \mbox{\truncate{3cm}{CC-BY-4.0}} \\
\mbox{\truncate{5.5cm}{openai-community/openai-gpt}} & \mbox{\truncate{1.8cm}{Decoder Only}} & \mbox{\truncate{1.3cm}{0.1B}} & \mbox{\truncate{3cm}{MIT}} \\
\mbox{\truncate{5.5cm}{openai/gpt-oss-20b}} & \mbox{\truncate{1.8cm}{Decoder Only}} & \mbox{\truncate{1.3cm}{22B}} & \mbox{\truncate{3cm}{Apache-2.0}} \\
\mbox{\truncate{5.5cm}{openbmb/UltraCM-13b}} & \mbox{\truncate{1.8cm}{Decoder Only}} & \mbox{\truncate{1.3cm}{13B}} & \mbox{\truncate{3cm}{MIT}} \\
\mbox{\truncate{5.5cm}{openchat/openchat-3.5-0106}} & \mbox{\truncate{1.8cm}{Decoder Only}} & \mbox{\truncate{1.3cm}{7B}} & \mbox{\truncate{3cm}{Apache-2.0}} \\
\mbox{\truncate{5.5cm}{openchat/openchat\_3.5}} & \mbox{\truncate{1.8cm}{Decoder Only}} & \mbox{\truncate{1.3cm}{7B}} & \mbox{\truncate{3cm}{Apache-2.0}} \\
\mbox{\truncate{5.5cm}{openlm-research/open\_llama\_13b}} & \mbox{\truncate{1.8cm}{Decoder Only}} & \mbox{\truncate{1.3cm}{13B}} & \mbox{\truncate{3cm}{Apache-2.0}} \\
\mbox{\truncate{5.5cm}{openlm-research/open\_llama\_7b}} & \mbox{\truncate{1.8cm}{Decoder Only}} & \mbox{\truncate{1.3cm}{7B}} & \mbox{\truncate{3cm}{Apache-2.0}} \\
\mbox{\truncate{5.5cm}{prithivMLmods/rStar-Coder-Qwen3-0.6B}} & \mbox{\truncate{1.8cm}{Decoder Only}} & \mbox{\truncate{1.3cm}{0.6B}} & \mbox{\truncate{3cm}{Apache-2.0}} \\
\mbox{\truncate{5.5cm}{project-baize/baize-v2-13b}} & \mbox{\truncate{1.8cm}{Decoder Only}} & \mbox{\truncate{1.3cm}{13B}} & \mbox{\truncate{3cm}{CC-BY-NC-4.0}} \\
\mbox{\truncate{5.5cm}{quantumaikr/KoreanLM}} & \mbox{\truncate{1.8cm}{Decoder Only}} & \mbox{\truncate{1.3cm}{7B}} & \mbox{\truncate{3cm}{Unknown}} \\
\mbox{\truncate{5.5cm}{rishiraj/CatPPT-base}} & \mbox{\truncate{1.8cm}{Decoder Only}} & \mbox{\truncate{1.3cm}{7B}} & \mbox{\truncate{3cm}{Apache-2.0}} \\
\mbox{\truncate{5.5cm}{roneneldan/TinyStories-1M}} & \mbox{\truncate{1.8cm}{Decoder Only}} & \mbox{\truncate{1.3cm}{1M}} & \mbox{\truncate{3cm}{Unknown}} \\
\mbox{\truncate{5.5cm}{sail/Sailor-7B}} & \mbox{\truncate{1.8cm}{Decoder Only}} & \mbox{\truncate{1.3cm}{8B}} & \mbox{\truncate{3cm}{Apache-2.0}} \\
\mbox{\truncate{5.5cm}{sarvamai/sarvam-1}} & \mbox{\truncate{1.8cm}{Decoder Only}} & \mbox{\truncate{1.3cm}{3B}} & \mbox{\truncate{3cm}{Unknown}} \\
\mbox{\truncate{5.5cm}{scb10x/typhoon-7b}} & \mbox{\truncate{1.8cm}{Decoder Only}} & \mbox{\truncate{1.3cm}{7B}} & \mbox{\truncate{3cm}{Apache-2.0}} \\
\mbox{\truncate{5.5cm}{segolilylabs/Lily-Cybersecurity-7B-v0.2}} & \mbox{\truncate{1.8cm}{Decoder Only}} & \mbox{\truncate{1.3cm}{7B}} & \mbox{\truncate{3cm}{Apache-2.0}} \\
\mbox{\truncate{5.5cm}{sequelbox/gpt-oss-20b-DES-Reasoning}} & \mbox{\truncate{1.8cm}{Decoder Only}} & \mbox{\truncate{1.3cm}{21B}} & \mbox{\truncate{3cm}{Apache-2.0}} \\
\mbox{\truncate{5.5cm}{stabilityai/stablelm-tuned-alpha-7b}} & \mbox{\truncate{1.8cm}{Decoder Only}} & \mbox{\truncate{1.3cm}{7B}} & \mbox{\truncate{3cm}{CC-BY-NC-SA-4.0}} \\
\mbox{\truncate{5.5cm}{stanford-crfm/BioMedLM}} & \mbox{\truncate{1.8cm}{Encoder Only}} & \mbox{\truncate{1.3cm}{Unknown}} & \mbox{\truncate{3cm}{BigScience RAIL 1.0}} \\
\mbox{\truncate{5.5cm}{tencent/Hunyuan-1.8B-Instruct}} & \mbox{\truncate{1.8cm}{Decoder Only}} & \mbox{\truncate{1.3cm}{2B}} & \mbox{\truncate{3cm}{Unknown}} \\
\mbox{\truncate{5.5cm}{tencent/Hunyuan-7B-Instruct}} & \mbox{\truncate{1.8cm}{Decoder Only}} & \mbox{\truncate{1.3cm}{8B}} & \mbox{\truncate{3cm}{Unknown}} \\
\mbox{\truncate{5.5cm}{theprint/TiTan-Qwen2.5-0.5B}} & \mbox{\truncate{1.8cm}{Decoder Only}} & \mbox{\truncate{1.3cm}{0.5B}} & \mbox{\truncate{3cm}{Apache-2.0}} \\
\mbox{\truncate{5.5cm}{thesephist/contra-bottleneck-t5-large-wikipedia}} & \mbox{\truncate{1.8cm}{Encoder Decoder}} & \mbox{\truncate{1.3cm}{Unknown}} & \mbox{\truncate{3cm}{MIT}} \\
\mbox{\truncate{5.5cm}{tiiuae/Falcon3-10B-Instruct}} & \mbox{\truncate{1.8cm}{Decoder Only}} & \mbox{\truncate{1.3cm}{10B}} & \mbox{\truncate{3cm}{Other}} \\
\mbox{\truncate{5.5cm}{tiiuae/Falcon3-1B-Instruct}} & \mbox{\truncate{1.8cm}{Decoder Only}} & \mbox{\truncate{1.3cm}{2B}} & \mbox{\truncate{3cm}{Other}} \\
\mbox{\truncate{5.5cm}{tiiuae/Falcon3-3B-Instruct}} & \mbox{\truncate{1.8cm}{Decoder Only}} & \mbox{\truncate{1.3cm}{3B}} & \mbox{\truncate{3cm}{Other}} \\
\mbox{\truncate{5.5cm}{tiiuae/Falcon3-7B-Instruct}} & \mbox{\truncate{1.8cm}{Decoder Only}} & \mbox{\truncate{1.3cm}{7B}} & \mbox{\truncate{3cm}{Other}} \\
\mbox{\truncate{5.5cm}{tiiuae/falcon-11B}} & \mbox{\truncate{1.8cm}{Decoder Only}} & \mbox{\truncate{1.3cm}{11B}} & \mbox{\truncate{3cm}{Unknown}} \\
\mbox{\truncate{5.5cm}{tiiuae/falcon-40b-instruct}} & \mbox{\truncate{1.8cm}{Decoder Only}} & \mbox{\truncate{1.3cm}{40B}} & \mbox{\truncate{3cm}{Apache-2.0}} \\
\mbox{\truncate{5.5cm}{tiiuae/falcon-7b-instruct}} & \mbox{\truncate{1.8cm}{Decoder Only}} & \mbox{\truncate{1.3cm}{7B}} & \mbox{\truncate{3cm}{Apache-2.0}} \\
\mbox{\truncate{5.5cm}{tomg-group-umd/DynaGuard-1.7B}} & \mbox{\truncate{1.8cm}{Decoder Only}} & \mbox{\truncate{1.3cm}{2B}} & \mbox{\truncate{3cm}{Apache-2.0}} \\
\mbox{\truncate{5.5cm}{tomg-group-umd/DynaGuard-4B}} & \mbox{\truncate{1.8cm}{Decoder Only}} & \mbox{\truncate{1.3cm}{4B}} & \mbox{\truncate{3cm}{Apache-2.0}} \\
\mbox{\truncate{5.5cm}{tomg-group-umd/DynaGuard-8B}} & \mbox{\truncate{1.8cm}{Decoder Only}} & \mbox{\truncate{1.3cm}{8B}} & \mbox{\truncate{3cm}{Apache-2.0}} \\
\mbox{\truncate{5.5cm}{unsloth/gpt-oss-20b-BF16}} & \mbox{\truncate{1.8cm}{Decoder Only}} & \mbox{\truncate{1.3cm}{21B}} & \mbox{\truncate{3cm}{Apache-2.0}} \\
\mbox{\truncate{5.5cm}{upstage/SOLAR-10.7B-Instruct-v1.0}} & \mbox{\truncate{1.8cm}{Decoder Only}} & \mbox{\truncate{1.3cm}{11B}} & \mbox{\truncate{3cm}{CC-BY-NC-4.0}} \\
\mbox{\truncate{5.5cm}{upstage/SOLAR-10.7B-v1.0}} & \mbox{\truncate{1.8cm}{Decoder Only}} & \mbox{\truncate{1.3cm}{11B}} & \mbox{\truncate{3cm}{Apache-2.0}} \\
\mbox{\truncate{5.5cm}{vilm/vinallama-7b-chat}} & \mbox{\truncate{1.8cm}{Decoder Only}} & \mbox{\truncate{1.3cm}{7B}} & \mbox{\truncate{3cm}{Llama-2}} \\
\mbox{\truncate{5.5cm}{wave-on-discord/silly-v0.2}} & \mbox{\truncate{1.8cm}{Decoder Only}} & \mbox{\truncate{1.3cm}{12B}} & \mbox{\truncate{3cm}{Apache-2.0}} \\
\mbox{\truncate{5.5cm}{yam-peleg/Experiment26-7B}} & \mbox{\truncate{1.8cm}{Decoder Only}} & \mbox{\truncate{1.3cm}{7B}} & \mbox{\truncate{3cm}{Apache-2.0}} \\
\mbox{\truncate{5.5cm}{zai-org/GLM-4-32B-0414}} & \mbox{\truncate{1.8cm}{Decoder Only}} & \mbox{\truncate{1.3cm}{33B}} & \mbox{\truncate{3cm}{MIT}} \\
\mbox{\truncate{5.5cm}{zai-org/glm-4-9b-chat-hf}} & \mbox{\truncate{1.8cm}{Decoder Only}} & \mbox{\truncate{1.3cm}{9B}} & \mbox{\truncate{3cm}{Other}} \\
\mbox{\truncate{5.5cm}{zhengr/MixTAO-7Bx2-MoE-v8.1}} & \mbox{\truncate{1.8cm}{Decoder Only}} & \mbox{\truncate{1.3cm}{13B}} & \mbox{\truncate{3cm}{Apache-2.0}} \\
\bottomrule
\end{tabular}
\normalsize
\end{table}

\section{Large Language Model Usage}\label{apdx:llm-use}

Large language models assisted in verifying theoretical results, implementing source code, and polishing this paper. The theoretical results and proofs were drafted by humans and verified by Gemini-2.5-Pro. Code implementation was primarily assisted by Claude-4-Sonnet and GPT-5-Codex, and the code’s correctness and alignment with the paper were checked by Gemini-2.5-Pro. Manuscript polishing was assisted by Gemini-2.5-Pro and GPT-5.

\end{document}